\def\eqref#1{equation~\ref{#1}}
\def\1{\bm{1}}
\DeclareMathAlphabet{\mathsfit}{\encodingdefault}{\sfdefault}{m}{sl}
\SetMathAlphabet{\mathsfit}{bold}{\encodingdefault}{\sfdefault}{bx}{n}
\newcommand\expp{e^}
\DeclareMathOperator{\snl}{SNL}
\title{Learning energy-based models by \\ self-normalising the likelihood}
\author{%
Hugo Senetaire \\
Technical University of Denmark \\ 
\And
Paul Jeha \\
Technical University of Denmark \\ 
\And
Pierre-Alexandre Mattei\thanks{Equal contribution} \\
Université Côte d’Azur, Inria, LJAD, CNRS
\And
Jes Frellsen\footnotemark[1] \\
 Technical University of Denmark 
}
\begin{document}

\maketitle

\begin{abstract}
Training an energy-based model (EBM) with maximum likelihood is challenging due to the intractable normalisation constant. Traditional methods rely on expensive Markov chain Monte Carlo (MCMC) sampling to estimate the gradient of logartihm of the normalisation constant. We propose a novel objective called self-normalised log-likelihood (SNL) that introduces a single additional learnable parameter representing the normalisation constant compared to the regular log-likelihood. SNL is a lower bound of the log-likelihood, and its optimum corresponds to both the maximum likelihood estimate of the model parameters and the normalisation constant. We show that the SNL objective is concave in the model parameters for exponential family distributions. Unlike the regular log-likelihood, the SNL can be directly optimised using stochastic gradient techniques by sampling from a crude proposal distribution. We validate the effectiveness of our proposed method on various density estimation tasks as well as EBMs for regression. Our results show that the proposed method, while simpler to implement and tune, outperforms existing techniques.
\end{abstract}

\section{Introduction}

\begin{wrapfigure}[20]{r}{0.4\linewidth}
\vspace{-1.5cm}
\includegraphics[width=\linewidth, trim=90pt 50 100 60, clip]{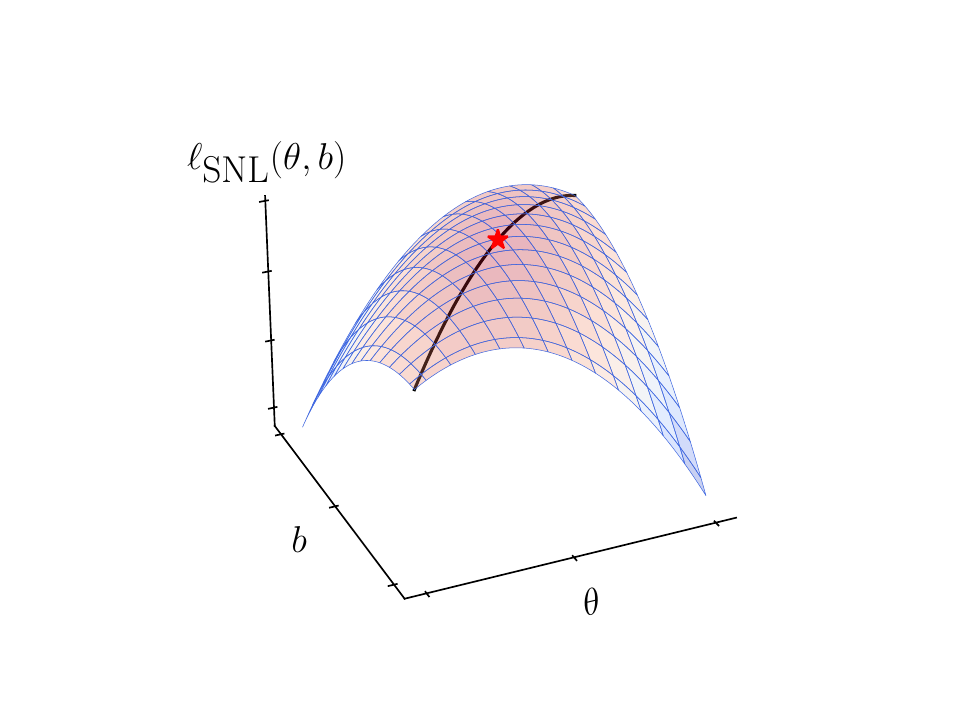}
\caption{The SNL for a Gaussian with unknown mean $\theta \in \mathbb{R}$ and unit variance. The SNL a function of both $\theta$ and the additional parameter $b$, estimating the normalising constant. The black line corresponds to maximising $b$ for each given $\theta$, which exactly recovers the log-likelihood. The red star is the maximum log-likelihood, that is also the maximum of $\ell_\textrm{SNL}(\theta, b)$, see details in \cref{{sec:gaussian case}}.}

\end{wrapfigure}

Energy-based models (EBMs) specify a probability density over a space $\mathcal{X}$ through a parameterised energy function $E_{\theta}: \mathcal{X} \rightarrow \mathbb{R}$. The associated density is then
\begin{equation}
\label{eq:ebm}
    p_{\theta}(x) = \frac{\expp{-E_{\theta}(x)}}{Z_{\theta}},
\end{equation}
where  $Z_{\theta}= \int \expp{-E_{\theta}}(x) \dif x$ is called the partition function or the normalising constant. However, $Z_{\theta}$ is often unknown and intractable, which makes training an EBM through maximum likelihood challenging. 

Initial methods addresses the challenge with a pseudo-likelihood function, an altered version of the likelihood function that circumvents the need to compute the normalising constant \citep{besag1975statistical, 10.1093/biomet/asp056, varin2011overview}.
Alternatively, gradients of the log-likelihood function can be estimated using the Boltzmann learning rule \citep{hinton1983optimal} or approximated using contrastive divergence \citep{hinton2002training} at the price of expensive and difficult-to-tune Markov chain Monte Carlo (MCMC) sampling methods \citep{dalalyan2017stronger, Welling2011BayesianLV}. 
To alleviate this difficulty, \citet{du2019implicit} proposed to maintain a buffer of samples during training using Langevin MCMC. This work was extended in \cite{du2021improved} by considering a Kullback-Leibler divergence term that was claimed negligible 
 in \cite{hinton2002training}.
On another note, \citet{xie2021tale} uses a flow trained alongside the EBM as a starting point for a short-term MCMC sampler reducing the dependency on long chains. In another work, \citet{gao2021learning} proposed to train a succession of EBM on data diffused with noise allowing to train and sample on conditional distribution. \citet{nijkamp2019learning} study the training of EBM for short-term non-convergent Langevin Markov chain and showed excellent generation but was not optimizing the likelihood anymore.
As it is critical to have a good estimate of this gradient, alternative methods consider using a proposal distribution $q$ together with importance sampling \citep{bengio2003quick}. However, this results in an objective that is an upper bound of the log-likelihood. Additionally, the choice of a proposal is critical, and a poor choice will lead to a loose bound. To tighten it, \cite{geng2021bounds} train the proposal to minimise the bound. This results in a min-max objective, similar to generative adversarial networks (GANs), which are infamous for their instability in training \citep{kumar2019maximum, farnia2020train}.


Another line of work aims at getting rid of the partition function altogether. It notably includes score matching and its variants \citep{hyvarinen2005estimation, Vincent2011ACB}. Score matching is a family of objectives that circumvents the normalising constant by matching the Stein score of the data distribution \citep{Stein1972ABF} to the one of the model. Then different approaches present alternatives: implicit score matching trades the Stein score of the data distribution for the Hessian of the model \citep{hyvarinen2005estimation, NIPS2010_6f3e29a3, martens2012estimating}, while denoising score matching models instead a corrupted version of the data, which has a tractable density. The latter approach has proven to be very successful in generating high dimensional data, such as images and videos \citep{song2021maximum,ho2022video}. Another approach that bypasses the normalising constant is to minimise the Stein discrepancy \citep{barp2019minimum,grathwohl2020learning}.

An alternative approach more related to our work is noise contrastive estimation (NCE), where \cite{pmlr-v9-gutmann10a} frames the problem as a logistic regression task between the data and a tractable noise distribution. This leads to a consistent estimate of the model parameters. Additionally, the normalisation constant is learned using an additional parameter \citep{mnih2012fast}. The crucial  issue of NCE, and that will not affect our method, is that the objective depends on the noise distribution, which is very hard to optimise \citep{chehab2022optimal}.

\subsection{Contributions}

Our work is inspired by two papers on local likelihood density estimation \citep{loader1996local,hjort1996locally}, which mention ways of bypassing the normalising constants in their quite specialised context. Our contributions are the following:
\begin{itemize}
    \item We propose a new objective, the self-normalised log-likelihood (SNL) that is amenable to stochastic optimisation and allows to recover both the maximum likelihood estimate and its normalising constant;
    \item We study theoretical properties of the SNL, in particular its concavity for exponential families and its links with information geometry;
    \item We show on various low-dimensional tasks that SNL is straightforward to implement, and works as well or better than other, more complex, techniques for learning EBMs.
    \item We show state-of-the-art result on image regression dataset using an Energy Based Model.
    \item We derive a surrogate training objective, SNELBO, for a VAE with an EBM prior, that we train on binary MNIST.
\end{itemize}


\section{Self-normalising the likelihood}

We deal with some data $x_1,\ldots,x_n \in \mathcal{X}$, assumed to be independent and identically distributed samples from a distribution $p_\text{data}$. Our goal is to fit an EBM $p_\theta$, as defined in \cref{eq:ebm}, to these data. The standard approach for fitting a probabilistic model is to maximise the likelihood function
\begin{equation}
    \ell(\theta) = \frac{1}{n} \sum_{i=1}^n \log p_\theta (x_i) .
\end{equation}
Unfortunately, as we will discuss now, maximising such a function for an EBM is a daunting task.

\subsection{Why maximum likelihood for EBMs is hard}
\label{section:why_likelihood_is_hard}
Let us focus on a single data point $x$. The log density of our EBM is
\begin{equation}
    \label{eq:likelihood_ebm}
     \log p_\theta(x) =   - E_{\theta}(x) - \log Z_\theta,
\end{equation}
with $\theta$ being the learnable parameters of the model. Gradient-based methods are a popular approach to train an EBM via maximum likelihood; those methods require the gradient of the log density with respect to the parameters, $\theta$, that is
\begin{equation}
\begin{split}
\label{eq:grad_likelihood_ebm}
     \nabla_{\theta}\log p_\theta(x) & =   - \nabla_{\theta}E_{\theta}(x) - \nabla_{\theta}\log Z_\theta.
\end{split}
\end{equation}
While automatic differentiation can, usually, easily compute the gradient of the energy, $\nabla_{\theta}E_{\theta}(x)$, it is not the case for $\nabla_{\theta}\log Z_\theta$. However, following the Boltzmann learning rule \citep{hinton1983optimal}, we can express the gradient of the normalising constant as an expected value (see e.g.\@ \citealp{song2021train} for a full derivation):
\begin{equation}
    \nabla_{\theta} \log{Z_\theta} = -\mathbb{E}_{X \sim p_\theta}[\nabla_{\theta} E_{\theta}(X)].
\end{equation}
We can obtain a Monte Carlo estimate of this gradient, but
this requires sampling from the EBM itself, which leads to the use of MCMC-based methods that often suffer from poor stability and high computational cost.
These procedures usually require very long chains to converge to the true distribution $p_{\theta}$. For the EBM to be computationally trainable, one needs to cut short the procedure. The obtained samples do not follow exactly $p_{\theta}$ meaning that the estimates of $\nabla_{\theta} \log{Z_\theta}$ are biased.
As it is critical to have a good and fast estimate of this gradient, alternative methods consider using a proposal distribution $q$ in an importance sampling fashion, to yield a cheaper estimate:
\begin{equation}
        \log Z_{\theta} = \log \int \expp{-E_{\theta}(x)}\dif x 
         = \log \int \frac{\expp{-E_{\theta}(x)}}{q(x)} q(x) \dif x 
         \geq \mathbb{E}_{X_1,...,X_M \sim q}\left[ \log \frac{1}{M} \sum_{m=1}^M \frac{\expp{-E_{\theta}(X_m)}}{q(X_m)} \right] ,
\end{equation}
where the last inequality is a consequence of Jensen's. In turn, this means that we will maximise the likelihood upper bound
\begin{equation}
    \ell_\text{IS} (\theta) = \frac{1}{n} \sum_{i=1}^n -E_\theta(x_i) - \mathbb{E}_{X_1,...,X_M \sim q}\left[ \log \frac{1}{M} \sum_{m=1}^M \frac{\expp{-E_{\theta}(X_m)}}{q(X_m)} \right]  \geq \ell(\theta),
\end{equation}
in lieu of the likelihood.
Depending on the choice of $q$, and on the number of importance samples $M$, this inequality is potentially very loose, meaning that one would train the model to maximise a biased approximation of the likelihood. Finding a good proposal $q$ that allows for fast sampling and correct estimation of its entropy is still a very active research area \citep{grathwohl2021no, kumar2019maximum, xie2018cooperative}. Usually, this proposal is trained in parallel with the model $E_{\theta}$ which leads to a very unstable adversarial objective \citep{geng2021bounds}.

\subsection{Can we make this logarithm disappear?}

The looseness of the importance sampling approximation $\ell_\text{IS} (\theta)$ is only due to Jensen's inequality: if the logarithm were replaced by a linear function, it would be possible to compute unbiased estimated of the log-likelihood gradients. Our key idea is therefore to linearise the logarithm, using the following simple variational formulation. This will help us bypass the issues mentioned in \cref{section:why_likelihood_is_hard}.

\begin{restatable}[]{lemma}{linearlog}
\label{lem:linear_log}
For all $z>0$, 
\begin{equation}
     \log {z} = \min_{\lambda \in \mathbb{R}}  \left( z e^{-\lambda}  + \lambda - 1 \right).
\end{equation}
\end{restatable}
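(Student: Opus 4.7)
The plan is to treat this as a one-variable convex optimisation problem: fix $z > 0$, define the auxiliary function $f_z(\lambda) = z e^{-\lambda} + \lambda - 1$ for $\lambda \in \mathbb{R}$, and show that its minimum is attained uniquely at $\lambda^* = \log z$ with value $\log z$. This is essentially the convex-conjugate identity for $-\log$ written in a form that avoids any Legendre-transform machinery.

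First I would differentiate: $f_z'(\lambda) = 1 - z e^{-\lambda}$ and $f_z''(\lambda) = z e^{-\lambda} > 0$ since $z > 0$. Strict convexity (together with $f_z(\lambda) \to +\infty$ as $|\lambda| \to \infty$, which follows from the exponential term dominating for $\lambda \to -\infty$ and the linear term dominating for $\lambda \to +\infty$) guarantees the existence of a unique global minimiser, which must be the unique zero of $f_z'$. Solving $z e^{-\lambda} = 1$ gives $\lambda^* = \log z$, so
\begin{equation}
    \min_{\lambda \in \mathbb{R}} f_z(\lambda) = f_z(\log z) = z \cdot e^{-\log z} + \log z - 1 = 1 + \log z - 1 = \log z.
\end{equation}

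There is no real obstacle here; the only subtlety is to ensure that the infimum is actually attained so that writing $\min$ rather than $\inf$ is justified, which is immediate from strict convexity and coercivity of $f_z$. It is worth noting in passing that this identity is exactly the variational representation of $\log$ that will later allow the logarithm in $\ell_\text{IS}$ to be linearised: replacing $\log Z_\theta$ by $Z_\theta e^{-b} + b - 1$ introduces the auxiliary parameter $b$ (playing the role of $\lambda$) that estimates $\log Z_\theta$, and since the expression inside the minimum is linear in $Z_\theta$, its Monte Carlo estimate with a proposal $q$ will be unbiased — which is the whole point of introducing the lemma.
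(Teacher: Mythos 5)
Your proof is correct and follows essentially the same route as the paper's: both analyse the one-variable function $\lambda \mapsto z e^{-\lambda} + \lambda - 1$ via its derivative and locate the unique minimiser at $\lambda = \log z$. Your added remarks on strict convexity and coercivity only make explicit what the paper's sign analysis of the derivative already establishes.
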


The proof of this lemma is elementary and provided in \cref{proof:linear_log}. This result is often used as an illustration of variational representations in variational inference tutorials (see e.g. \citealp{jordan1999introduction}, Section 4.1; \citealp{ormerod2010explaining}, Section 3), but we are not aware of it being used in a context similar to ours.
Applying \cref{lem:linear_log} to \cref{eq:likelihood_ebm} give us, for any $x \in \mathcal{X}$,
\begin{equation}
    \label{eq:derivation_to_snl}
    \begin{split}
        \log p_{\theta} (x) &=  - E_{\theta}(x) - \log Z_\theta
        = -E_{\theta}(x) - \min_{b \in \mathbb{R}} \left( \expp{-b}Z_{\theta} + b -1 \right) \\
        & = - E_{\theta}(x) + \max_{b \in \mathbb{R}} \left( - \expp{-b}Z_{\theta} - b + 1 \right)
        = \max_{b \in \mathbb{R}} \left( - E_{\theta}(x) - b - \expp{-b}Z_{\theta} + 1 \right).
    \end{split}
\end{equation}
Using \cref{eq:derivation_to_snl}, we define a new objective named the \textbf{self-normalised log-likelihood (SNL)} $\ell_{\snl}$ that is a function of the original parameter of the EBM $\theta$ and a single additional parameter $b \in \mathbb{R}$:
\begin{equation}
    \label{def:SNL}
    \ell_{\operatorname{SNL}}(b,\theta)  = \frac{1}{n} \sum_{i=1}^n - E_{\theta}(x_i) - b - \expp{-b}Z_{\theta} + 1.
\end{equation}
When maximised w.r.t.\@ $b$, we can recover the exact log-likelihood of a given model $p_{\theta}$ and maximising both $\theta$ and $b$ leads to the maximum log-likelihood estimate, as formalised below.

\begin{restatable}[]{thm}{equalityloglikelihood}
\label{thm:equality_log_likelihood}
    For any given $\theta$, when the SNL is maximised with respect to $b$, we have access to the exact log-likelihood of the model.
    \begin{equation}
        \max_{b \in \mathbb{R}} \ \ell_{\mathrm{SNL}}(\theta, b) = \ell(\theta),
    \end{equation}

    Moreover, at the optimum, $b$ is the normalisation constant:

    \begin{equation}
    \arg \max_{b \in \mathbb{R}} \ \ell_{\snl}(\theta, b) = \log{Z_{\theta}}.
    \end{equation}

    Finally, there is a one-to-one correspondence between the local optima of the SNL and the log-likelihood.
\end{restatable}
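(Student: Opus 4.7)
The first two claims follow directly from \cref{lem:linear_log}. Only the block $-b - \expp{-b}Z_\theta + 1$ in the definition of $\ell_{\snl}(\theta, b)$ depends on $b$, and maximising it amounts to applying \cref{lem:linear_log} with $z = Z_\theta$ and $\lambda = b$, up to sign. This yields $\max_{b\in\mathbb{R}} \ell_{\snl}(\theta, b) = -\tfrac{1}{n}\sum_{i=1}^n E_\theta(x_i) - \log Z_\theta = \ell(\theta)$, with unique maximiser $b^\star(\theta) := \log Z_\theta$, which settles the first two displayed equalities.

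For the correspondence between local optima, the plan is to exploit the partial-maximisation structure just exposed. Define $\phi(\theta) := (\theta, \log Z_\theta)$; this is the candidate bijection. The key technical ingredient is strict concavity of $b \mapsto \ell_{\snl}(\theta, b)$ (its second partial in $b$ is $-\expp{-b}Z_\theta < 0$), which guarantees that the inner maximiser $b^\star(\theta)$ is unique and depends continuously on $\theta$.

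For the forward direction, suppose $(\theta^\star, b^\star)$ is a local maximum of $\ell_{\snl}$. The first-order condition in $b$ forces $b^\star = \log Z_{\theta^\star}$ by strict concavity. For $\theta$ in a small neighbourhood of $\theta^\star$, continuity of $b^\star(\cdot)$ places $(\theta, b^\star(\theta))$ near $(\theta^\star, b^\star)$, hence $\ell(\theta) = \ell_{\snl}(\theta, b^\star(\theta)) \leq \ell_{\snl}(\theta^\star, b^\star) = \ell(\theta^\star)$, so $\theta^\star$ is a local maximum of $\ell$. Conversely, if $\theta^\star$ is a local maximum of $\ell$, setting $b^\star := \log Z_{\theta^\star}$ and using the sandwich $\ell_{\snl}(\theta, b) \leq \ell(\theta) \leq \ell(\theta^\star) = \ell_{\snl}(\theta^\star, b^\star)$ on a neighbourhood of $(\theta^\star, b^\star)$ gives the converse. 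Bijectivity of $\phi$ is then immediate from the uniqueness of $b^\star(\theta)$.

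The main obstacle lies in this third part, and it hinges on strict concavity in $b$ producing a unique, continuous partial maximiser. I would interpret ``local optima'' as local maxima, which is the relevant case for likelihood maximisation and for which the sandwich argument above suffices; a statement covering saddles or local minima would require a finer second-order analysis, since partial maximisation can alter the index of a critical point in directions orthogonal to $b$.
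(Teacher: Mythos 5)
Your proof is correct. The first two claims are handled exactly as in the paper: a direct application of \cref{lem:linear_log} with $z = Z_\theta$, isolating the $b$-dependent block $-b - \expp{-b}Z_\theta + 1$. Where you genuinely diverge is the third claim. The paper argues via first-order conditions: it takes a local optimum $\theta^*$ of $\ell$, sets $b^* = \log Z_{\theta^*}$, and verifies that $\nabla_{\theta,b}\,\ell_{\snl}$ vanishes at $(\theta^*, b^*)$ by substituting $\expp{-b^*} = 1/Z_{\theta^*}$ into the gradient, then asserts the converse "with the same reasoning." Strictly speaking that only establishes a correspondence between \emph{stationary points}, not between local optima. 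Your sandwich argument --- using strict concavity in $b$ to get a unique, continuous partial maximiser $b^\star(\theta) = \log Z_\theta$, and then chaining $\ell_{\snl}(\theta, b) \leq \ell(\theta) \leq \ell(\theta^*) = \ell_{\snl}(\theta^*, b^*)$ in both directions --- actually proves the local-maximum correspondence as stated, at the modest cost of assuming $\theta \mapsto Z_\theta$ is continuous (which holds under the usual regularity needed for the model to be well defined). Your closing caveat about saddle points is also apt: neither your argument nor the paper's addresses them, and the theorem is most naturally read as concerning local maxima. In short, your route is slightly more demanding technically but delivers a stronger and more honest conclusion than the paper's gradient check.
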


The proof is available in  \cref{proof:equalityloglikelihood} and is a simple application of the variational formulation of the logarithm. The important consequence of this result is that \emph{maximising the SNL w.r.t. $\theta$ and $b$ will recover both the maximum log-likelihood estimate and its normalising constant}. This ability of our objective to learn both the model and its normaliser motivates the name \textbf{self-normalised log-likelihood}. We chose to call the extra parameter $b$ because, when $E_{\theta}$ is model as a neural network, $b$ can simply be understood as the bias of its last layer.


Another direct consequence of \cref{eq:derivation_to_snl} is that, for any $\theta$ and $b$, SNL is a lower bound of the log-likelihood. Using the importance-sampling upper bound, this will lead to useful "sandwichings" of the log-likelihood:
\begin{equation}
    \ell_{\snl}(\theta, b) \leq \ell(\theta) \leq \ell_{\text{IS}}(\theta).
\end{equation}

\subsection{Why maximising the SNL is easier}
\label{sec:maximising_snl_with_sgd}

Why is the SNL more tractable than the standard log-likelihood? After all, the SNL also involves the intractable normalising constant. The key difference is that, since it depends linearly on it, it is now possible to obtain unbiased estimates of the SNL gradients.

Indeed, using a proposal $q$ gives us estimates of the gradient of $Z_{\theta}$ with importance sampling. Using
\begin{equation}
     Z_{\theta} = \int \frac{ \expp{-E_{\theta}(x)}}{q(x)} q(x) dx = \mathbb{E}_{X \sim q}\left[\frac{\expp{-E_{\theta}(X)}}{q(X)}\right],
\end{equation}
allows to get unbiased estimates of the SNL gradients w.r.t. $\theta$ and $b$. More precisely, for a batch of size $n_B$ and a number of samples $M$, we use the following estimate of the gradient w.r.t. $\theta$:
\begin{equation} \label{eq:grad_theta_snl}
    \nabla_{\theta} \ell_{\snl}(\theta,b) \approx - \frac{1}{n_b} \sum_{i=1}^{n_b} \nabla_{\theta} E_{\theta}(x_i) + \expp{-b} \frac{1}{M} \sum_{m=1}^M \left[\frac{\nabla_{\theta}E_{\theta}(x_m) \expp{-E_{\theta}(x_m)}}{q(x_m)}\right].
\end{equation}
Similarly, we can compute unbiased estimates of the gradients w.r.t.\@ $b$:
\begin{equation} \label{eq:grad_b_snl}
    \nabla_{b} \ell_{\snl}(\theta,b) \approx - 1+  \expp{-b} \frac{1}{M} \sum_{m=1}^M \left[\frac{\expp{-E_{\theta}(x_m)}}{q(x_m)}\right].
\end{equation}
The theory of stochastic optimisation (see e.g. \citealp{bottou2018optimization}) then ensures that SGD-like algorithms, when applied to SNL, will converge to the maximum likelihood estimate and its normalising constant. In practice, we use popular algorithms like Adam to train $\theta$ and $b$ jointly. Some more specialised algorithms could also be used. For instance, \citet{bietti2017stochastic} call optimisation problems similar to ours "infinite datasets with finite sum structure" (in our case, the infinite dataset are samples from the proposal, and the finite sum corresponds to the actual data), and propose an algorithm fit for this purpose. The full algorithm for training an energy based model with SNL is available in \cref{sec:algorithm} with Algorithm \ref{algo:SNL_density_estimation}.

\paragraph{Are the gradients of $\ell$ and $\ell_{\snl}$ related?}
If we rewrite this gradient in the same fashion as \cref{eq:grad_likelihood_ebm}, we can express the gradient of the SNL with the gradient of the log-likelihood :
\begin{equation}
\begin{aligned}
    \nabla_{\theta} \ell_{\snl}(\theta,b) &=  - \frac{1}{n} \sum_{i=1}^{n}  \nabla_{\theta} E_{\theta}(x) - \expp{-b+\log{Z_{\theta}}} \nabla \log{Z_{\theta}} \\
    & = \nabla_{\theta} \ell_{\theta} + \nabla_{\theta} \log{Z_{\theta}} (1-\expp{-b+\log{Z_{\theta}}}). 
\end{aligned}
\end{equation}
When $b$ is equal to the normalisation constant $\log{Z_{\theta}}$, we can obtain an unbiased estimator of the true log-likelihood gradient.

\subsection{Practicalities when using SNL}

\label{sec:practicality}
For EBMs to be well-posed, it is required that the normalisation constant exists, i.e., that $\int \expp{-E_{\theta}(x)}dx < \infty$. To that end, following \citet{grathwohl2020learning} and similarly to exponential tilting \citep{siegmund1976importance}, multiplying the un-normalised probability by a density $d$ ensures the existence of the normalisation constant. The distribution becomes $p_{\theta}(x) \propto \expp{-E_{\theta}(x)}d(x)$.

We call $d$ the base density or the base distribution. In the case where the proposal $q$ is equal to the base distribution, the SNL estimates and the gradient estimates simplify:
\begin{equation}
    \nabla_{\theta} Z_{\theta} \approx \frac{1}{M} \sum_{m=1}^{M} \nabla_{\theta}E_{\theta}(x_m) \expp{-E_{\theta}(x_m)}.
\end{equation}  

Furthermore, we initialise $b$ by estimating $\log{Z_{\theta}}$ with importance sampling using the proposal $q$ at the beginning of the training procedure. This practice allows us to get gradient estimates of SNL somewhat close to the true gradient log-likelihood.

\subsection{Related works}

Objectives similar to SNL have been proposed in the past. In particular, in the context of local likelihood density estimation, \citet{loader1996local} and \citet{hjort1996locally} handled intractable normalising constants in a similar fashion to ours. \citet{arbel2020generalized} leveraged a similar approach to estimate the normalisation constant in order to train hybrids of generative adversarial nets and EBMs. Neither of these works used importance sampling. \citet{pihlaja2010family}  and \citet{gutmann2011bregman} proposed families of generalisations of NCE which contain an objective similar to SNL as a special case. In these generalisations of NCE, the noise distribution plays a similar role to our proposal, but the obtained estimates in general differ from maximum likelihood. The novelty of SNL lies in the fact that it allows to perform exact maximum likelihood optimisation (regardless of the choice of proposal) for an EBM using stochastic optimisation together with importance sampling.


\section{Some theoretical properties of SNL}

\subsection{Concavity of SNL for exponential families}

It is a well-known fact that the log-likelihood of exponential families is concave because of the particular form of the gradient of the normaliser. We provide a proof in \cref{sec:proof_concave_likelihood} for completeness. The self-normalised log-likelihood preserves this property with the exponential family: the SNL is even \emph{jointly} concave in both parameters.

\begin{restatable}[]{thm}{concaveexpfamilysnl}
{\label{thm:concave_exponential_family_snl}}
     If $(p_\theta)_\theta$ is a canonical exponential family, then $\ell_{\snl}(\theta,b)$ is jointly concave.
\end{restatable}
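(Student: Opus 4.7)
The plan is to directly use the canonical parameterisation of the exponential family, which turns the energy into a linear function of $\theta$, and then exploit the fact that the troublesome term $e^{-b}Z_\theta$ becomes an integral of exponentials of affine functions of $(\theta,b)$.

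First, I would write any canonical exponential family in EBM form. With sufficient statistic $T\colon \mathcal{X}\to\mathbb{R}^d$ and (possibly) a base measure/density $h$, the energy is $E_\theta(x) = -\theta^\top T(x) - \log h(x)$, which is affine in $\theta$. Substituting into the definition of $\ell_{\snl}$ in \cref{def:SNL} gives
\begin{equation*}
\ell_{\snl}(\theta,b) \;=\; \frac{1}{n}\sum_{i=1}^{n}\bigl(\theta^\top T(x_i) + \log h(x_i)\bigr) \;-\; b \;-\; e^{-b} Z_\theta \;+\; 1,
\end{equation*}
so apart from the $-e^{-b}Z_\theta$ term, every remaining term is affine in $(\theta,b)$, hence concave.

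Second, I would focus on showing that $(\theta,b) \mapsto e^{-b} Z_\theta$ is \emph{convex} (so its negative is concave). The key identity is
\begin{equation*}
e^{-b} Z_\theta \;=\; \int e^{\theta^\top T(x) - b}\, h(x)\, \dif x,
\end{equation*}
because $Z_\theta$ is taken with respect to the base measure $h$. For each fixed $x$, the map $(\theta,b) \mapsto \theta^\top T(x) - b$ is affine, so $(\theta,b)\mapsto e^{\theta^\top T(x)-b}$ is convex as the composition of the convex function $\exp$ with an affine map. Convexity is preserved by integration against the nonnegative measure $h(x)\dif x$, so $e^{-b} Z_\theta$ is convex in $(\theta,b)$, and $-e^{-b}Z_\theta$ is concave.

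Finally, $\ell_{\snl}(\theta,b)$ is a sum of affine terms and one concave term, hence jointly concave in $(\theta,b)$, which is the claim. I do not expect any real obstacle: the only minor subtleties are being consistent about whether the base measure/density is absorbed into $E_\theta$ or kept inside the integral defining $Z_\theta$, and verifying that integration and differentiation under the integral sign are justified (which follows from the standard regularity conditions satisfied on the interior of the natural parameter space of an exponential family).
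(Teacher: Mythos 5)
Your proof is correct and follows essentially the same route as the paper's: both decompose $\ell_{\snl}$ into an affine part plus $-e^{-b}Z_\theta$ and establish joint convexity of $(\theta,b)\mapsto e^{-b}Z_\theta$ by writing it as an integral of exponentials of affine functions of $(\theta,b)$, using convexity of $\exp$ and the fact that integration against a nonnegative measure preserves convexity. The only cosmetic difference is that you carry the base measure $h$ explicitly while the paper suppresses it (and the paper verifies the convexity inequality by hand rather than citing the composition rule); as a minor note, your argument needs no differentiation under the integral sign at all, so that caveat at the end is unnecessary.
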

The proof is available in \cref{sec:proof_concave_snl} and follows directly from the convexity of the exponential.
This means that the many theoretical results on stochastic optimisation for convex functions could be leveraged to prove convergence guarantees of SNL (see e.g. \citealp{bottou2018optimization}).

\subsection{An information-theoretic interpretation}

Maximum likelihood has the following classical information-theoretic interpretation: when the number of samples goes to infinity, maximising the likelihood is equivalent to minimising the Kullback-Leibler divergence between $p_\theta$ and the true data distribution $p_\text{data}$ (see e.g. \citealp{white1982maximum}). A similar rationale exists also for SNL, and involves a generalisation of the Kullback-Leibler divergence to un-normalised finite measures. 
This generalisation exists also in the more general context of $f$-divergences, as detailed for instance by \citet[Section 3.6]{amari2000methods} or \citet{stummer2010divergences}.
It reduces to the usual definition when $f_1$ and $f_2$ are probability densities and shares many of the merits of the usual Kullback-Leibler divergence (see \cref{sec:KL_appendix} for more details).

Standard maximum likelihood is asymptotically equivalent to minimising $\textup{KL}(p_\textup{data} ||p_\theta)$. As we detail in \cref{sec:KL_appendix}, this turns out to be equivalent to minimising the generalised divergence between $p_\textup{data}$ and all un-normalised models proportional to $\expp{-E_\theta}$:
\begin{equation}
\label{eq:genkl_main}
    \textup{KL}(p_\textup{data} ||p_\theta) = \min_{c>0} \textup{KL}(p_\textup{data} ||c \expp{-E_\theta}).
\end{equation}
This new divergence is related to the SNL in the same way that the standard Kullback-Leibler divergence is related to the likelihood. Indeed, for any $c>0$, 
\begin{align}
    \textup{KL}(p_\textup{data} ||c \expp{-E_\theta}) 
    &= \int \log\left(\frac{p_\textup{data}(x)}{c \expp{-E_\theta(x)} } \right) p_\textup{data}(x)(x) dx +   c Z_\theta - 1  \\ &= - \int \expp{-E_\theta(x)} p_\textup{data}(x)dx - \log c + cZ_\theta - 1  + \underbrace{\int \log (p_\textup{data}(x)) p_\textup{data}(x)dx}_\text{does not depend on $\theta$ nor $c$}.
\end{align}
The first integral, that depends on $\theta$, is intractable, but may be estimated if we have access to an i.i.d.\@ dataset $x_1,\ldots,x_n$, leading to the estimate
\begin{align}
     \textup{KL}(p_\textup{data} ||c \expp{-E_\theta})  &\approx - \frac{1}{n} \sum_{i=1}^n \expp{-E_\theta(x_i)}  - \log c + cZ_\theta - 1  + \int \log (p_\textup{data}(x)) p_\textup{data}(x)\dif x \\ &= - \ell_{\snl}(\theta,\log c) + \int \log (p_\textup{data}(x)) p_\textup{data}(x)\dif x,
\end{align}
which means that minimising the SNL will asymptotically resemble minimising the generalised Kullback-Leibler divergence.
In the context of local likelihood density estimation, \citet{hjort1996locally} also derived similar connections with the generalised Kullback-Leibler divergence. More recently, \citet{bach2022sum} applied the same variational representation of the logarithm to the generalised Kullbakc-Leibler, in a context very different from ours.

\section{Extending SNL}

\subsection{Self-normalization in the regression setting}

We consider the supervised regression problem where we are given a dataset of pairs of inputs and targets $(x,y) \in \mathcal{X} \times \mathcal{Y}$ where the target space $\mathcal{Y}$ is continuous. We want to estimate the conditional distribution $p_{\text{data}}(y|x)$ using an EBM:
\begin{equation}
    p_{\theta}(y|x) = \frac{\expp{-E_{\theta}(x,y)}}{Z_{\theta, x}},
\end{equation}
where $Z_{\theta, x} = \int \expp{-E_{\theta}(x,y)} \dif y$.
The main difference with the previous density estimation setup is that the normalisation constant $Z_{\theta,x}$ also depends on the input value $x$. 

Because the normaliser now also depends on $x$, we introduce a new family of functions $b_{\phi}$ whose role is to estimate the normalisation constant $Z_{\theta,x}$. Similarly to the density estimation case, we define the self-normalised log-likelihood as
\begin{equation}
\label{eq:snl_for_regression}
    \ell_{\snl}(\theta,\phi) = \frac{1}{n} \sum_{i=1}^n \left( - E_{\theta}(x_i,y_i) - b_{\phi}(x_i) - Z_{\theta, x_i} \expp{- b_{\phi}(x_i)} + 1 \right).
\end{equation}
Provided the family $b_{\phi}$ is expressive enough, this SNL for regression enjoys the same properties as its unsupervised counterpart. We can retrieve the maximum likelihood estimate when maximising the SNL in both $\theta$ and $\phi$. Moreover, at the optimum, for any $x \in \mathcal{X}$, $b_{\phi}(x)$ is the normalisation constant $\log{Z_{\theta,x}}$. The SNL for regression is also a lower bound of the true conditional log-likelihood.
Following reasoning of \cref{sec:maximising_snl_with_sgd}, we propose to train an EBM model for regression using the SNL. To that end, we consider a proposal $q_{\psi}$ that depends on both $x$ and $y$ and is parameterised by $\psi$. For instance, \cite{gustafsson2020train} use a mixture density network (MDN, \citealp{bishop1994mixture}) proposal. In \cite{gustafsson2020train}, the EBM is trained jointly with the MDN. The MDN maximisation objective is an average combination between the negative Kullback-Leibler divergence between the $p_{\theta}$ and $q_{\psi}$.

In concurrent work, \citet{sander2025joint} uses a similar loss as SNL extended for regression in \cref{eq:snl_for_regression} in the context of multi-label classification and label ranking and extend this method to the broader Fenchel-Young losses.



\subsection{Self-normalised evidence lower bound}
\label{sec:snelbo}
The SNL aproach allows training a variational auto-encoder (VAE) with an energy-based prior using approximate inference. Both \cite{pang2020learning} and  \cite{schröder2023energy} trained an EBM as a prior in the latent space for a noisy sampler but required MCMC to sample from the posterior and the prior during training. We introduce the self-normalised evidence lower bound (SNELBO), a surrogate ELBO objective that leverages the self-normalised log-likelihood to allow for straightforward training. 

Formally, we consider a variational auto-encoder \cite{kingma2013auto} with a prior $p_{\theta}(z)$ defined by an EBM composed of an energy function $E_{\theta}$ parameterised by a neural network and an associated base distribution $d(z)$, i.e., $p_{\theta}(z) = \frac{\expp{-E_{\theta}}d(z)}{Z_{\theta}}$ where $Z_{\theta} = \int \expp{-E_{\theta}(z)} d(z) dz$. 
The generative model is the same as in VAE and an output density $p_{\phi}(z|x)$ is parameterised by a neural network $g_{\phi}(z)$.
%
Since the likelihood is intractable, we posit a conditional variational distribution $q_{\gamma}(z|x)$ to approximate the posterior of the model, similarly to the original VAE. Using \cref{lem:linear_log}, we can obtain the SNELBO:
\begin{multline}
\label{eq:snelbo_eq}
\mathcal{L}_{\snl}(\theta, \phi, \gamma, b) = \mathbb{E}_{q_{\gamma}(z|x)}\left[\log p_{\phi}(x|z)\right] + \mathbb{E}_{q_{\gamma}(z|x)} \left[\log{\frac{d(z)}{q_{\gamma}(z|x)}}\right] \\
    + \mathbb{E}_{q_{\gamma}(z|x)}\left[-E_{\theta}(z) - b \right] -  \mathbb{E}_{d(z)}\left[\expp{-E_{\theta}(z) -b}\right] +1.
\end{multline}
We note that the SNELBO is a lower bound on the log-likelihood, $\ell(\theta, \phi)$, and a lower bound on the regular ELBO, $\mathcal{L}$, that is tight for optimal $b$, i.e., $\ell(\theta, \phi) \geq \mathcal{L}(\theta, \phi, \gamma) \geq \mathcal{L}_{\snl}(\theta, \phi, \gamma, b)$ and $\mathcal{L}(\theta, \phi, \gamma) = \max_{b \in \mathbb{R}}  \mathcal{L}_{\snl}(\theta, \phi, \gamma, b)$. See \cref{sec:elbo_snl_appendix} for derivation details.
This surrogate objective can be interpreted as the combination of the ELBO from a VAE whose prior is the base distribution $d(z)$ with a regularization term from the EBM. As such, the EBM can be added easily on top of any VAE model.




\section{Experiments}

\subsection{Density estimation}

\begin{table}[tbp]
\begin{adjustbox}{max width=1\textwidth,center}
\setlength{\tabcolsep}{2pt}
\begin{tabular}{@{}ccccccccccc@{}}
\toprule
\multicolumn{1}{l}{} & \multicolumn{1}{l}{}   &  & \multicolumn{2}{c}{Funnel}    & \multicolumn{2}{c}{Pinwheel}                             & \multicolumn{2}{c}{Checkerboard}   & \multicolumn{2}{c}{Four Circles}          \\ \cmidrule(lr){4-5} \cmidrule(lr){6-7} \cmidrule(lr){8-9} \cmidrule(l){10-11}
Objective            & Base Dist &  & $\ell_{\text{IS}}$ & $\ell_{\text{SNL}}$  & $\ell_{\text{IS}}$ & $\ell_{\text{SNL}}$  & $\ell_{\text{IS}}$ & $\ell_{\text{SNL}}$ & $\ell_{\text{IS}}$& $\ell_{\text{SNL}}$\\ \cmidrule(r){1-2} \cmidrule(l){4-11} 
NCE                  & $\mathcal{N}(\mathbf{0}, \mathbf{1})$                                                         &  &    $-2.040$ \tiny{($\pm 0.251$)} &  $-2.044$ \tiny{($\pm 0.254$)} &  $-\mathbf{1.004}$ \tiny{($\pm 0.072$)} &  $-\mathbf{1.020}$ \tiny{($\pm 0.084$)} &    $-1.947$ \tiny{($\pm 0.033$)} &  $-1.964$ \tiny{($\pm 0.032$)} &    $-2.117$ \tiny{($\pm 0.005$)} &  $-2.120$ \tiny{($\pm 0.006$)}
 \\
SNL                  & $\mathcal{N}(\mathbf{0}, \mathbf{1})$                                                         &  &  $-\mathbf{1.811}$ \tiny{($\pm 0.175$)} &  $-\mathbf{1.831}$ \tiny{($\pm 0.181$)} &   $-1.031$ \tiny{($\pm 0.066$)} &  $-1.035$ \tiny{($\pm 0.065$)} &   $-\mathbf{1.902}$ \tiny{($\pm 0.012$)} &  $-\mathbf{1.905}$ \tiny{($\pm 0.012$)} &  $-\mathbf{1.914}$ \tiny{($\pm 0.022$)} &  $-\mathbf{1.918}$ \tiny{($\pm 0.022$)} 
  \\
NCE                  & None                                                        &  &    $-1.894$ \tiny{($\pm 0.096$)} &  $-1.896$ \tiny{($\pm 0.097$)} &  $-1.063$ \tiny{($\pm 0.019$)} &  $-1.069$ \tiny{($\pm 0.024$)} &    $-1.997$ \tiny{($\pm 0.022$)} &  $-2.025$ \tiny{($\pm 0.056$)} &   $-2.231$ \tiny{($\pm 0.038$)} &  $-2.232$ \tiny{($\pm 0.039$)} 
 \\
SNL                  & None                                                         &  &    $-2.006$ \tiny{($\pm 0.378$)} &  $-2.066$ \tiny{($\pm 0.468$)} &   $-1.072$ \tiny{($\pm 0.040$)} &  $-1.086$ \tiny{($\pm 0.030$)} &  $-1.966$ \tiny{($\pm 0.030$)} &  $-1.969$ \tiny{($\pm 0.028$)} &   $-1.971$ \tiny{($\pm 0.047$)} &  $-1.973$ \tiny{($\pm 0.048$)} 
  \\ \bottomrule
\end{tabular}
\end{adjustbox}
\captionsetup{skip=0.5\baselineskip}
\caption{\label{tab:small_toy_distribution_estimation}Evaluation of the performance of EBMs trained with NCE or SNL objective with or without a base distribution. We generate each datasets five times and run each set of parameter once on each. We report the mean and standard deviation of the estimated log-likelihood and the self-normalised likelihood $\ell_{SNL}$. Highest is best.}
\end{table}

We evaluate the performances of an EBM on density estimation, trained with SNL on four different, two-dimensional, generated datasets. Each dataset has $7000$ samples in its training set, $2000$ samples in its test set and $1000$ in its validation set.
Each model is trained with the Adam optimiser \cite{kingma2014adam}, with a learning rate of $1e-3$, for 25 epochs and with a standard Gaussian proposal. We compare our model with an EBM trained in the same condition but with noise contrastive estimation. Both setups uses a fully connected network architecture specified in \cref{tab:toy_architecture}. Both setup also leverages a base distribution that equals the proposal distribution and that is not trained. \cref{fig:2d_density_estimation} shows our results.
Qualitatively, we observe that the two models perform on par, except for the four circles dataset, where SNL dominates.

We perform a second set of density estimation experiments, with the same setup as above, and explore the impact of the base distribution. For that purpose we perform a set of experiment with a proposal and a set without. We train each configuration five times, re-generating the dataset every time. We show our results in \cref{tab:small_toy_distribution_estimation}. According to those results, SNL-trained EBM with a base distribution, perform better than their NCE counter part across all datasets except Pinwheel. Contrary to an SNL trained EBM, we observe that using a base distribution with a NCE trained EBM is detrimental to its performances.

We experiment our SNL-trained EBM on the UCI datasets, to explore the impact of higher dimensions. We use a simple gaussian with full covariance as the proposal. We report our results in \cref{tab:uci_distribution_estimation_2}, where we beat or are on par with other method; and, to the best of our knowledge, it is the first competitive application of EBMs in this setting. 

\begin{table}
\begin{adjustbox}{max width=1\textwidth,center}
\renewcommand{\arraystretch}{1.2}
\begin{tabular}{lccccccccc}
\toprule
                 & \multicolumn{1}{c}{EBM - SNL} & \multicolumn{1}{c}{Gaussian} & \multicolumn{1}{c}{MADE} & \multicolumn{1}{c}{MADE MoG} & \multicolumn{1}{c}{Real NVP (5)} & \multicolumn{1}{c}{Real NVP (10)} & \multicolumn{1}{c}{MAF (5)} & \multicolumn{1}{c}{MAF (10)} & \multicolumn{1}{c}{MAF MoG (5)} \\ 
\cmidrule(lr){1-1}\cmidrule(lr){2-10}
Power ($d=6$)    & $[0.28, 0.41]$                & $-7.74$                      & $-3.08$                  & $0.40$                       & $-0.02$                          & $0.17$                            & $0.14$                      & $0.24$                       & $0.30$                          \\
Gas ($d=8$)      & $[5.73, 7.74]$                & $-3.58$                      & $3.56$                   & $8.47$                       & $4.78$                           & $8.33$                            & $9.07$                      & $10.08$                      & $9.59$                          \\
Hepmass ($d=21$) & $[-19.22, -19.20]$            & $-27.93$                     & $-20.98$                 & $-15.15$                     & $-19.62$                         & $-18.71$                          & $-17.70$                    & $-17.73$                     & $-17.39$                        \\ 
\bottomrule
\end{tabular}
\end{adjustbox}
\captionsetup{skip=0.5\baselineskip}

\caption{ \label{tab:uci_distribution_estimation_2} For EBM-SNL the upper bound corresponds to $\ell_{IS}$ and the lower bound to $\ell_{SNL}$. Both are computed using 20000 samples from the test set.}
\end{table}

\subsection{EBMs for regression}

\begin{wrapfigure}[22]{r}{0.4\linewidth}
    \vspace{-7mm}
         \centering
         \includegraphics[width=\linewidth]{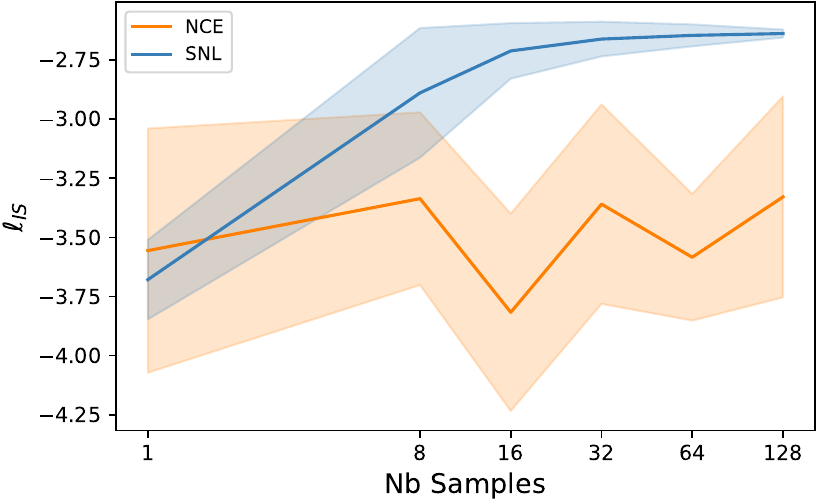} \\
         \hspace{1.5em} (a) Cell Count \\[1em]
         \includegraphics[width=\linewidth]{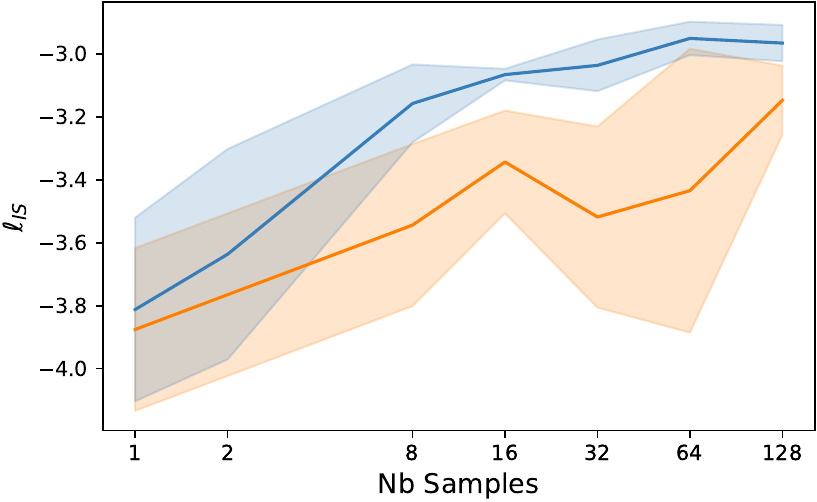} \\
               \hspace{1.5em} (b) UTK Faces
 \caption{\label{fig:sample_analysis_utk_l_is} Performance evolution of the EBMs for regression trained on Cell Count and UTKFaces dataset.}
\end{wrapfigure}

Following \cite{gustafsson2022learning}, we study and compare our training method on two one-dimensional regression tasks (see Figure \cref{fig:1dregressiondataset} in \cref{sec:ds_reg})  and four image regression datasets. We parameterise our model with the same architecture as \cite{gustafsson2022learning} where the output of a feature extractor, $h_x$, feeds both the proposal $q_{\psi}(.|h_x)$ and a head neural network for the EBM (see Figure 1 in \cite{gustafsson2022learning} for more details). With that formulation, the feature extractor is learned only with gradients calculated from the EBM loss (either NCE or SNL in our case) and the proposal is learned with a fixed feature extractor.

In our experiments, we consider three different proposals:
\begin{itemize}
    \item A mixture density network proposal whose parameters are given by a small fully connected neural network.
    \item A fixed multivariate Gaussian $\mathcal{N}(\mu, \Sigma)$ whose parameters are estimated before training with the training dataset and fixed during training.
    \item A fixed uniform distribution $\mathcal{U}$ that is defined by leveraging the knowledge from the dataset and fixed during training.
\end{itemize}

All models are evaluated using an estimate of the log-likelihood with $M = 20,000$ samples from a multivariate Gaussian whose parameters are estimated before training:
\begin{equation}
    \ell_{IS} = \frac{1}{N} \sum_{i=1}^{N} \left( -E_{\theta}(x_i,y_i) - b_{\phi}(x_i) -\log \frac{1}{M} \sum_{m=1}^{M} \expp{ -E_{\theta}(x_i,y_i^{(m)}) - b_{\phi}(x_i)} \right)
\end{equation}
Since NCE normalises the EBM at the optimum \citep{mnih2012fast}, we also provide the $\ell_{\snl}$ (i.e. a lower bound estimate of the log-likelihood) for each set of parameters using the same proposal as $\ell_{\text{IS}}$ with $20000$ samples. If  $\ell_{\snl}$  is close to $\ell_{\text{IS}}$, this means that the lower-bound is tight and $b_{\phi}$ approximates correctly $\log{Z_{\theta}}$ (or if no $b_{\phi}$ is used the network is self-normalised and $Z_{\theta} =1$).

\subsubsection{1D regression datasets}

We consider here the two artificial datasets for 1D regression with multimodal distribution $p(y|x)$ (see \cref{fig:1dregressiondataset}). We provide a description of the neural network architecture in \cref{sec:1d_arch}. On both datasets, the SNL always outperformed its NCE counterparts with respect to the estimated upper bound $\ell_{\text{IS}}$ \cref{table:smalltoydensityestimation}. Moreover, the $\ell_{\snl}$ of the NCE is loose compared to the $\ell_{\snl}$. We provide additional results in \cref{table:largetoydensityestimation}. Using a base distribution to ensure the existence of the normalisation constant $Z_{\theta}$ either improves or gives similar results with the SNL objective but systematically damages the results when minimising the NCE loss. As mentioned by \cite{mnih2012fast}, with both objectives, explicitly modelling $b_{\phi}$ does not provide a better estimation of the network. The normalisation is implicitly learned with $E_{\theta}$.

\begin{table}[]

\begin{adjustbox}{max width=0.7\textwidth,center}
\begin{tabular}{cccccll}
\toprule
\multicolumn{1}{l}{} & \multicolumn{1}{l}{}          & \multicolumn{1}{l}{} & \multicolumn{2}{c}{Regression Dataset 1} & \multicolumn{2}{c}{Regression Dataset 2}                                         \\ 
 \cmidrule(r){4-5} \cmidrule(l){6-7} 
Objective            & \multicolumn{1}{l}{Proposal $q$} & \multicolumn{1}{l}{} & $\ell_{\text{IS}}$ & $\ell_{\text{SNL}}$ & \multicolumn{1}{c}{$\ell_{\text{IS}}$} & \multicolumn{1}{c}{$\ell_{\text{SNL}}$} \\ \cmidrule{1-2} \cmidrule{4-7} 
NCE                  & $\mathcal{N}(\mu, \Sigma)$        &  &              $-0.030$ \tiny{($\pm 0.278$)} &  $-0.718$ \tiny{($\pm 0.256$)} &   $-2.592$ \tiny{($\pm 0.214$)} &  $-3.559$ \tiny{($\pm 1.881$)}  \\
NCE                  & MDN K2                          &                      &     $-0.611$ \tiny{($\pm 0.154$)} &  $-1.492$ \tiny{($\pm 0.993$)} &   $-2.451$ \tiny{($\pm 0.088$)} &  $-2.634$ \tiny{($\pm 0.084$)}     \\
SNL                  & $\mathcal{N}(\mu, \Sigma)$                               &                      & $0.164$ \tiny{($\pm 0.088$)} &  $0.033$ \tiny{($\pm 0.077$)} &   $-\mathbf{1.813}$ \tiny{($\pm 0.109$)} &  $-\mathbf{1.836}$ \tiny{($\pm 0.109$)} \\
SNL                  & MDN K2       &                      &    $\mathbf{0.255}$ \tiny{($\pm 0.017$)} &  $\mathbf{0.251}$ \tiny{($\pm 0.016$)} &   $-2.099$ \tiny{($\pm 0.250$)} &  $-2.170$ \tiny{($\pm 0.353$)}  \\ \bottomrule 
\end{tabular}
\end{adjustbox}
\caption{\label{table:smalltoydensityestimation} Evaluation of regression EBMs on the 1D toy regression problems with two different objectives and two different proposals. Each model is trained for five runs and we report the mean and standard deviation of the estimated log-likelihood $\ell_{\text{IS}}$ and the self normalized log-likelihood $\ell_{\text{SNL}}$. Using the SNL as objective clearly outperforms the NCE.}
\end{table}

\subsubsection{Image regression datasets}

We train an NCE-EBM setup as well as an SNL-EBM setup on an image regression task. We train on four different datasets, steering angle, cell count, UTKFaces and BIWI and follow the same setup as \cite{gustafsson2022learning}.
Similarly to the 1D regression datasets, SNL-trained EBM always outperforms its NCE counterparts \cref{table:small_image_regression}. When using NCE, the resulting energy is often un-normalised whereas SNL trained EBM provides a tighter bound.  In \cref{fig:sample_analysis_utk_l_is}, we observe that our method improves with the number of samples but stagnates after $M=64$ samples. On the other hand, NCE seems to improve with the sample size but in a less compelling fashion. We provide additional results in \cref{table:image_regression}.

\begin{table}[]
\begin{adjustbox}{max width=1\textwidth,center}
\setlength{\tabcolsep}{2pt}

\begin{tabular}{@{}cclllllllll@{}}
\toprule
\multicolumn{1}{l}{} & \multicolumn{1}{l}{}                                        &  & \multicolumn{2}{c}{Steering Angle}                                               & \multicolumn{2}{c}{Cell Count}                                                   & \multicolumn{2}{c}{UTKFaces}                                                     & \multicolumn{2}{c}{BIWI}                                                         \\ \cmidrule(lr){4-5}  \cmidrule(lr){6-7}  \cmidrule(lr){8-9} \cmidrule(lr){10-11} 
Objective            & Proposal                                                    &  & \multicolumn{1}{c}{$\ell_{\text{IS}}$} & \multicolumn{1}{c}{$\ell_{\text{SNL}}$} & \multicolumn{1}{c}{$\ell_{\text{IS}}$} & \multicolumn{1}{c}{$\ell_{\text{SNL}}$} & \multicolumn{1}{c}{$\ell_{\text{IS}}$} & \multicolumn{1}{c}{$\ell_{\text{SNL}}$} & \multicolumn{1}{c}{$\ell_{\text{IS}}$} & \multicolumn{1}{c}{$\ell_{\text{SNL}}$} \\ \cmidrule(r){1-2} \cmidrule(l){4-11} 
NCE                  & $\mathcal{N}(\mu,\Sigma)$ &  & $ -3.649$ \tiny{$(\pm 1.224)$}                    & \textsc{Unnormalized}                                      & $-3.367$ \tiny{$(\pm 0.399)$}                     & $-9.675$ \tiny{$(\pm 0.605)$}                      & $-3.147$ \tiny{$(\pm 0.1100)$}                    & $-8.223$ \tiny{$(\pm 3.795)$}                      & $-11.02$ \tiny{$(\pm 0.576)$}                     & \textsc{Unnormalized}                                     \\
NCE                  & MDN-8                                                       &  &       $-4.001$ \tiny{($\pm 0.667$)}&  \textsc{Unnormalized} &$-3.864$ \tiny{($\pm 0.048$)}&  \textsc{Unnormalized} &$-4.123$ \tiny{($\pm 0.21$)}&  $-5.170$ \tiny{($\pm 0.955$)}&  $-11.998$ \tiny{($\pm 0.339$)}&  \textsc{Unnormalized}  \\
SNL                  & $\mathcal{N}(\mu,\Sigma)$ &  & $-2.665$ \tiny{$(\pm 1.37)$}                      & $-3.973$ \tiny{$(\pm 3.15)$}                       & $-2.701$ \tiny{$(\pm 0.041)$}                     & $-2.725$ \tiny{$(\pm 0.046$}                     & $-2.966$ \tiny{$(\pm 0.057)$}                     & $-2.991$ \tiny{$(\pm 0.069)$}                      & $-10.86$ \tiny{$(\pm 1.017)$}                     & $-11.05 $ \tiny{$(\pm 1.141)$}                     \\
SNL                  & Uniform                                                     &  & $-\mathbf{1.402}$ \tiny{$(\pm 0.068)$}                     & $-\mathbf{1.423}$ \tiny{$(\pm 0.074)$}                      &     $-\mathbf{2.604}$ \tiny{$(\pm 0.001$}      &      $-\mathbf{2.620}$    \tiny{$(\pm 0.007)$}                          & $-2.927$ \tiny{$(\pm 0.032)$}                     & $-\mathbf{2.965}$ \tiny{$(\pm 0.019)$}                     & $-10.44$ \tiny{$(\pm 0.138)$}                     & $-10.51$ \tiny{$(\pm  1.222)$}                     \\
SNL                  & MDN-8                                                       &  & $-1.673$ \tiny{$(\pm 0.042)$}                     & $-1.692$ \tiny{$(\pm 0.046)$}                      & $-2.801$ \tiny{$(\pm 0.071)$}                     & $-2.811$ \tiny{$(\pm 0.071)$}                      & $-\mathbf{2.921}$ \tiny{$(\pm 0.055)$}                     & $-2.943$ \tiny{$(\pm 0.062)$}                      & $-\mathbf{10.01}$ \tiny{$(\pm 0.092)$}                     & $-\mathbf{10.04}$ \tiny{$(\pm 0.091)$}                     \\ \bottomrule 
\end{tabular}
\end{adjustbox}
\caption{\label{table:small_image_regression} Evaluation of EBMs for regression on image regression datasets with two different objectives and different proposals. Each model is trained for five runs and we report the mean and standard deviation of the estimated log-likelihood ($\ell_{IS}$) and estimated self-normalized log-likelihood ($\ell_{SNL}$). When the proposal is MDN, the proposal is learned jointly with the model following \cite{gustafsson2022learning}.  \looseness=-1} 
\end{table}


\subsection{VAE with latent prior EBM}





\begin{wraptable}[8]{r}{0.3\textwidth}
\vspace{-4mm}
\begin{adjustbox}{max width=1\linewidth,center}
\renewcommand{\arraystretch}{1.2}
\begin{tabular}{cc}
\toprule
\multicolumn{1}{l}{VAE} &-89.10   \\
\multicolumn{1}{l}{VAE-MoG } &  -88.73     \\ 
{VAE-EBM Post-Hoc} &  -88.11  \\
\multicolumn{1}{l}{VAE-EBM} &  \textbf{-87.09} \\
\bottomrule
\end{tabular}
\end{adjustbox}
\captionsetup{skip=0.5\baselineskip}
\caption{ \label{tab:mnist_snelbo} ELBO/SNELBO for VAEs with different priors.}

\end{wraptable}

We train a VAE with EBM prior on binary MNIST using SNELBO, as outlined in \cref{sec:snelbo}. We parameterise the output distribution with a Bernoulli distribution with parameters from a neural network $g_{\phi}$, i.e. $p_{\phi}(x|z) = \mathcal{B}(x|g_{\phi}(z))$ and the approximate posterior with a Gaussian whose parameters are given by a neural network $q_{\gamma}(z|x)$. We either train from scratch the VAE with EBM prior (VAE-EBM) or we only train the prior of a pre-trained VAE with standard Gaussian prior (VAE-EBM Post-hoc). We compare to a standard VAE with Gaussian prior (VAE) and a VAE with a Mixture of Gaussian Prior (VAE-MoG) and $10$ mixtures. All VAEs are trained with a latent space of size $16$. In \cref{tab:mnist_snelbo}, we show that training VAE-EBM with latent EBM provides better SNELBO.


\section{Conclusion}

We proposed a new objective to train energy-based models (EBMs) called self-normalising log-likelihood (SNL). By maximising SNL with respect to the parameters of the EBM and an additional single parameter $b$, we can recover both the maximum likelihood estimate and the normalising constant at optimality. We conducted an extensive experimental study on low-dimension datasets for density estimation, complex regression problems and training VAEs with EBM prior.



\bibliography{iclr2024_conference}

\clearpage
\appendix

\vskip 0.1in


{\centering \Large \bf Supplementary material:\\ Learning energy-based models \\  by self-normalising the likelihood \par}

\section{Proofs}

\subsection{Variational $\log$}

Inspired by \citet{jordan1999introduction} and \citet{ormerod2010explaining}, we show the following lemma:

\linearlog*

\begin{proof}
\label{proof:linear_log}
Let $z >0$ and $\lambda \in \mathbb{R}$, we define the function:
    \begin{equation}
        h(\lambda) = z e^{-\lambda}  + \lambda - 1.
    \end{equation}
    
By differentiating this function with respect to $\lambda$, we get:
\begin{equation}
    h'(\lambda) = -z  e^{-\lambda}  + 1.
\end{equation}

The differentiated function $h'$ is negative for $\lambda < \log{z}$ and positive for $\lambda > \log{z}$. Thus the minimum of $h$ is reached at $\lambda = \log(z)$ and $h(\log{(z)}) = \log(z)$, hence the proof.
\end{proof}

\subsection{Proof and \cref{thm:equality_log_likelihood} \label{proof:equalityloglikelihood}}

We begin by reminding notations: we consider an energy-based model, which specifies a probability density over a space $\mathcal{X}$ through a parameterised energy function $E_\theta : \mathcal{X} \rightarrow \mathbb{R}$. The associated density is:

\begin{equation}
    p_\theta(x)=\frac{e^{-E_\theta(x)}}{Z_\theta},
\end{equation}

where $Z_\theta=\int e^{-E_\theta}(x) \mathrm{d} x$ is the partition function. Given $n$ data points $x_1, ..., x_n \in \mathcal{X}$, we define the log-likelihood function:

\begin{equation}
\begin{split}
    \ell(\theta) & = \frac{1}{n} \sum_{i=1}^n  -E_{\theta}(x_i) - \log Z_{\theta}.
\end{split}
\end{equation}

We define as well the self-normalised log-likelihood (SNL) as:

\begin{equation}
\begin{split}
    \ell_{\mathrm{SNL}}(\theta, b) & =\frac{1}{n} \sum_{i=1}^n-E_\theta\left(x_i\right)-b-e^{-b} Z_\theta+1.
\end{split}
\end{equation}

We now recall \cref{thm:equality_log_likelihood}:

\equalityloglikelihood*

\begin{proof}

    Using \cref{lem:linear_log}, we show that for any $\theta$,  $\max_{b \in \mathbb{R}}\ell_{\text{SN}}(\theta,b) = \ell(\theta) $.

\begin{align}
      \ell(\theta) & = \frac{1}{n} \sum_{i=1}^n \log p_\theta(x_i) \\
     & = \frac{1}{n} \sum_{i=1}^n \log(\expp{-E_{\theta}(x_i)}) - \log Z_{\theta} \\
     & = \frac{1}{n} \sum_{i=1}^n \log(\expp{-E_{\theta}(x_i)}) - \min_{b\in \mathbb{R}}(e^{-b}Z_{\theta} + b -1) \\
     & = \frac{1}{n} \sum_{i=1}^n \log(\expp{-E_{\theta}(x_i)}) + \max_{b \in \mathbb{R}}(-e^{-b}Z_{\theta} - b +1) \\
     & = \max_b \frac{1}{n} \sum_{i=1}^n \log(\expp{-E_{\theta}(x_i)}) - \expp{-b}Z_{\theta} -b +1 \\
    & = \max_{b \in \mathbb{R}}\ell_{\snl}(\theta,b).
\end{align}

We show that $\ell(\theta)$ and $\ell_{SNL}(\theta,b)$ share the same local maxima.

Let $\theta^*$ a local optimum of $\ell(\theta^*)$, we will construct a local optimum of $\ell_{SNL}$.

Let $b^* = \log Z_{\theta^*}$, then the gradient of $\nabla_{\theta,b} \ell_{SNL} = 0$ :

\begin{align}
    \nabla_b{\theta}\ell_{SNL}(\theta^*, b)(x) = -1 + Z_{\theta^*}\expp{-b} = 0
\end{align}

\begin{align}
    \nabla_{\theta}\ell_{SNL}(\theta^*, b)(x) &= - \nabla_{\theta} E_{\theta^*}(x) - \expp{-b} \nabla_{\theta} Z_{\theta} \\
    &= - \nabla_{\theta} E_{\theta^*}(x) - \expp{-b} Z_{\theta^*} \nabla_{\theta} \log{Z_{\theta^*}} \\
    &= - \nabla_{\theta} E_{\theta^*}(x) - \frac{1}{Z_{\theta^*}}Z_{\theta^*} \nabla_{\theta} \log{Z_{\theta^*}} \\
    &= - \nabla_{\theta} E_{\theta}(x) -  \nabla_{\theta} \log{Z_{\theta^*}} \\
    &= \nabla_{\theta}  \ell(\theta^*)(x) \\
    & = 0
\end{align}

Thus, for any local optimum of $\ell(\theta^*)$, the pair $(\theta^*, \log{Z_{\theta^*}}$ is a local optimum $\ell_{SNL}$. 

Conversely, with the same reasoning, for any pair of $(\Tilde{\theta},\Tilde{b})$ local optimum of $\ell_{SNL}$, $\Tilde{\theta}$ is a local maximum of $\ell$.

\end{proof}

\subsection{Proof of \cref{thm:concave_exponential_family_likelihood}}

\label{sec:proof_concave_likelihood}

For completeness, we begin by proving the classical result about the convexity of exponential families. For more details, see e.g. \citet[Chapter 3]{wainwright2008graphical}.

\begin{restatable}[]{thm}{concaveexpfamilylikelihood}
\label{thm:concave_exponential_family_likelihood}
    If $(p_\theta)_\theta$ is a canonical exponential family, then $\ell(\theta)$ is concave. In particular, the gradient and the Hessian of $\log Z_\theta$ are respectively the mean and the covariance matrix of the sufficient statistics.
\end{restatable}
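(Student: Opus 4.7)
The plan is to exploit the well-known fact that the Hessian of the log-partition function in a canonical exponential family equals the covariance of the sufficient statistics, which is positive semi-definite; combined with the fact that the data-dependent part of the log-likelihood is linear in the natural parameter, this immediately yields concavity.

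First I would fix notation by writing the family in canonical form as $p_\theta(x) = h(x)\exp(\theta^\top T(x) - \log Z_\theta)$, so that the log-likelihood becomes a linear function of $\theta$ minus $\log Z_\theta$, up to a constant that does not depend on $\theta$. Concavity of $\ell$ therefore reduces entirely to convexity of $\log Z_\theta$, which in turn will follow from showing that its Hessian is positive semi-definite.

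Next I would differentiate $Z_\theta = \int h(x)\exp(\theta^\top T(x))\,\mathrm{d}x$ under the integral sign and apply the quotient rule. A short computation then shows that $\nabla_\theta \log Z_\theta = \mathbb{E}_{p_\theta}[T(X)]$: pull the gradient inside the integral and divide by $Z_\theta$ to recognise the density. Differentiating this identity once more and writing $\nabla Z_\theta = Z_\theta \nabla \log Z_\theta$ gives $\nabla^2_\theta \log Z_\theta = \mathbb{E}_{p_\theta}[T(X)T(X)^\top] - \mathbb{E}_{p_\theta}[T(X)]\mathbb{E}_{p_\theta}[T(X)]^\top = \mathrm{Cov}_{p_\theta}[T(X)]$, which establishes the second assertion of the theorem.

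Since any covariance matrix is positive semi-definite, $\log Z_\theta$ has a PSD Hessian everywhere and is therefore convex, so $\ell(\theta)$ is concave. The only real obstacle is the measure-theoretic justification for exchanging differentiation and integration twice; I would handle this by appealing to the classical regularity theory for exponential families on the interior of their natural parameter space (where $Z_\theta$ is in fact real-analytic and all moments of $T(X)$ exist and are finite), rather than reproving dominated convergence from scratch.
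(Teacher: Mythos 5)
Your proposal is correct and follows essentially the same route as the paper's proof: both reduce concavity of $\ell$ to convexity of $\log Z_\theta$, compute its gradient and Hessian by differentiating under the integral sign to identify them with the mean and covariance of the sufficient statistics, and conclude from positive semi-definiteness of the covariance matrix. The only cosmetic difference is that you carry an explicit carrier $h(x)$ and invoke real-analyticity on the interior of the natural parameter space to justify the interchange, whereas the paper absorbs $h$ into the base measure and cites a standard theorem for differentiating the normalising integral; both are adequate.
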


\begin{proof}
    Let's consider an exponential family whose densities with respect to a base measure are parameterised as $p_\theta(x) = \expp{\theta^T s(x) - \log{Z_{\theta}}}$, where $s(x)$ is the sufficient statistics and $\theta$ the natural parameters. To simplify formulas, we will assume that we observe a single data point $x \in \mathcal{X}$. Observing several i.i.d. data points will preserve concavity because a sum of concave functions remains concave, so there is no loss of generality.

    The log-likelihood of such a model is given by:
    \begin{align}
        \ell(\theta) & = \theta^T s(x) -\log{Z_{\theta}} = \theta^T s(x) - \log {\int  \expp{\theta^T s(x)}\mathrm{d}x} .
    \end{align}

    We will prove that this objective is concave by showing that the Hessian is negative semi-definite. Let's calculate the gradient and Hessian of $\log{Z_{\theta}}$. For integrals akin to the normalising constant, switching differentiation and integration is allowed (see e.g. \citealp[Theorem 2.7.1]{lehmann2005testing}), and we get

    \begin{align}
        \nabla_{\theta} \log{Z_{\theta}}  &= \nabla_{\theta}  \log {\int  \expp{\theta^T s(x)}dx} \\
        & = \frac{\int s(x) \expp{\theta^T s(x)}dx}{\int  \expp{\theta^T s(x)}dx} \\
        & = \int s(x) \expp{\theta^T s(x) - \log{Z_{\theta}}}\mathrm{d}x \\
        & = \mathbb{E}_{\theta}[s(x)];
    \end{align}

    \begin{align}
        H_{\theta} (\log{Z_{\theta}}) & = \int s(x) s(x)^T \expp{\theta^T s(x) - \log{Z_{\theta}}}\mathrm{d}x - \left(\int s(x) \expp{\theta^T s(x) - \log{Z_{\theta}}} \mathrm{d}x\right) (\nabla_{\theta} \log{Z_{\theta}})^T \\
        & = \int s(x) s(x)^T \expp{\theta^T s(x) - \log{Z_{\theta}}}\mathrm{d}x \\
        &- \left(\int s(x) \expp{\theta^T s(x) - \log{Z_{\theta}}}\mathrm{d}x\right) \left(\int s(x) \expp{\theta^T s(x) - \log{Z_{\theta}}}\mathrm{d}x\right) ^T \\
        & = \mathbb{E}_{\theta}[s(x)s(x)^T] - \mathbb{E}_{\theta}[s(x)]\mathbb{E}_{\theta}[s(x)]^T \\
        & = \mathbb{V}_{\theta}[s(x)].
    \end{align}

Using the Hessian of $\log{Z_{\theta}}$, we can express directly the Hessian of the log-likelihood $\ell(\theta)$:
    \begin{align}
    H(\ell(\theta)) & = - \mathbb{V}_{\theta}[s(x)].
    \end{align}

The covariance matrix $\mathbb{V}_{\theta}[s(x)]$ is positive semi-definite thus the hessian $H(\ell(\theta))$ is negative semi-definite. Hence, $\ell(\theta)$ is concave.

\end{proof}

\subsection{Proof of \cref{thm:concave_exponential_family_snl}}

\label{sec:proof_concave_snl}

\concaveexpfamilysnl*

\begin{proof}
     Using the same notations as the previous proof, our exponential family is parameterised as $p_\theta(x) = \expp{\theta^T s(x) - \log{Z_{\theta}}}$, where $s(x)$ is the sufficient statistics and $\theta$ the natural parameters. We again assume without loss of generality that we observe a single data point $x \in \mathcal{X}$.

     The self-normalised log-likelihood is as follows:

    \begin{align}
        \ell_{\snl}(\theta, b) & = \theta^T s(x) -b - \expp{-b + \log{Z_{\theta}}} + 1 \\
        & =  \theta^T s(x) - b +1 -  \int  \expp{ \theta^T s(x) - b} \mathrm{d}x.
    \end{align}

    Since the first term of the equation is affine, we will show that the function $(\theta, b) \mapsto \expp{-b + \log{Z_{\theta}}}$ is jointly convex in $(\theta, b)$. 
    
Let $(b_1, \theta_1)$ and $(b_2, \theta_2)$ any given pair of parameters and let $\lambda \in [0,1]$ :

\begin{align}
    \int  \expp{ (\lambda \theta_1 + (1-\lambda) \theta_2)^T s(x) - (\lambda b_1 + (1-\lambda) b_2)} \mathrm{d}x &= \int  \expp{ \lambda (\theta_1^T s(x) - b_1) + (1-\lambda) (\theta_2^T s(x) - b_2)} \mathrm{d}x \\
    & \geq \left[ \int \left( \lambda \expp{\theta_1^T s(x) - b_1} + (1-\lambda) \expp{\theta_2^T s(x) - b_2}  \right) \mathrm{d}x \right] \\
    & = \lambda  \int \expp{\theta_1^T s(x) - b_1} \mathrm{d}x  + (1-\lambda) \int \expp{\theta_2^T s(x) - b_2} \mathrm{d}x.
\end{align}

The function $(\theta, b) \mapsto \expp{-b + \log{Z_{\theta}}}$ is convex jointly in $(\theta,b)$, thus $(\theta, b) \mapsto \ell_{\snl}(\theta, b)$ is also convex jointly in $(\theta,b)$ which concludes the proof.
\end{proof}

\section{The Gaussian case}
\label{sec:gaussian case}

We consider a univariate Gaussian with unknown mean $\theta \in \mathbb{R}$ and known unit variance. The model is parameterised as an exponential family with energy and normalising constant:

\begin{equation}
     E_{\theta}(x) = - \theta x,\; \log Z_\theta = \frac{1}{2} \theta^2,
\end{equation}
the base measure being the standard Gaussian measure.

For a dataset $(x_1,...,x_n) \in \mathbb{R}^n$, the log-likelihood is:
\begin{equation}
    \ell(\theta) = \frac{1}{n}  \sum_{i=1}^{n} x_i \theta  -  \frac{1}{2} \theta^2,
\end{equation}
which is concave and is maximised at $\hat{\theta}_\textup{ML} = \bar{x}_n$. The SNL equals:

\begin{align}
    \ell_{\snl}(\theta, b) &= \frac{1}{n}  \sum_{i=1}^{n} x_i \theta - b - Z_\theta \expp{-b} + 1 \\
    &= \frac{1}{n}  \sum_{i=1}^{n} x_i \theta - b -  \expp{\frac{1}{2} \theta^2 -b} + 1,
\end{align}
which is also concave and is maximised at $(\hat{\theta}_\textup{SNL}, \hat{b}_\textup{SNL}) = (\bar{x}_n, \bar{x}_n^2/2)$.

\section{The Bernoulli case}
\label{sec:bernoulli case}

In the same vain as in \cref{sec:gaussian case}, we derive here the SNL for a Bernoulli distribution, in order to gain basic insights.
We consider a Bernoulli distribtuion with unknown natural parameter $\theta \in \mathbb{R}$ ($\theta$ here is the logit of the probability of success). The model is parameterised as an exponential family with energy and normalising constant:

\begin{equation}
     E_{\theta}(x) = - \theta x,\; \log Z_\theta = \log\left ( 1 + \expp{\theta}\right),
\end{equation}
the base measure being the uniform measure on $\{0,1\}$.

For a dataset $(x_1,...,x_n) \in \{0,1\} ^n$, the log-likelihood is:
\begin{equation}
    \ell(\theta) = \frac{1}{n}  \sum_{i=1}^{n} x_i \theta  - \log\left ( 1 + \expp{\theta}\right),
\end{equation}
which is concave and is maximised at $\hat{\theta}_\textup{ML} = \textup{logit}(\bar{x}_n)$. The SNL equals:

\begin{align}
    \ell_{\snl}(\theta, b) &= \frac{1}{n}  \sum_{i=1}^{n} x_i \theta - b - Z_\theta \expp{-b} + 1 \\
    &= \frac{1}{n}  \sum_{i=1}^{n} x_i \theta - b -  \expp{-b} \left ( 1 + \expp{\theta}\right) + 1,
\end{align}
which is also concave and is maximised at $(\hat{\theta}_\textup{SNL}, \hat{b}_\textup{SNL}) = (\textup{logit}(\bar{x}_n), \log(1+\expp{\textup{logit}(\bar{x}_n)}))$.

\section{The Kullback-Leibler divergence for un-normalised densities}

\label{sec:KL_appendix}

We consider a measured space $\mathcal{X}$, equipped with a base measure $\mathrm{d}x$ (typically the Lebesgue or the counting measure). Let $f_1,f_2$ be the densities of two finite measures. The Kullback-Leibler between these is then defined as

\begin{equation}
\label{eq:defkl}
    \textup{KL}(f_1 ||f_2) = \int \log\left(\frac{f_1(x)}{f_2(x)} \right) f_1(x) \mathrm{d}x + \left( \int f_2(x)\mathrm{d}x - \int f_1(x)\mathrm{d}x \right).
\end{equation}
It is clear that this reduces to the usual KL when $f_1$ and $f_2$ are probability densities. For more details, see for instance \citet[Section 3.6]{amari2000methods} or \citet{stummer2010divergences}.

Why is this a sensible generalisation? We can write our un-normalised densities as $f_1 = \mu_1 p_1$ and $f_2 = \mu_1 p_2$, where 

\begin{equation}
    \mu_1 = \int f_1(x)\mathrm{d}x, \; \mu_2 = \int f_2(x)\mathrm{d}x.
\end{equation}
Plugging this into \eqref{eq:defkl} gives
\begin{align}
\label{eq:genkl_technical}
    \textup{KL}(f_1 ||f_2) =& \mu_1 \textup{KL}(p_1 ||p_2) + \mu_1 \log \left(\frac{\mu_1}{\mu_2} \right) + (\mu_2 - \mu_1) \\
    &= \mu_1\left( \textup{KL}(p_1 ||p_2) +h\left(\frac{\mu_2}{\mu_1} \right) \right),
\end{align}
where $h: t \mapsto t - 1 - \log t$. Since $h(t) > 0$ for all $t\neq 1$ and $h(1) = 0$, we will have
\begin{itemize}
    \item $\textup{KL}(f_1 ||f_2) \geq 0$
    \item $\textup{KL}(f_1 ||f_2)=0$ if and only if $f_1 = f_2$.
\end{itemize}

This means that this generalised KL enjoys some of the nice properties of the usual KL, which motivates its use for statistical inference.

Another interesting property that is a direct consequence of \cref{eq:genkl_technical} is that
\begin{equation}
    \textup{KL}(p_1 ||p_2) = \min_{c>0} \textup{KL}(p_1 ||c f_2),
\end{equation}
which means that we can recover the KL between probability densities by minimising the KL between un-normalised densities, transforming the computation of the normalising constant into an optimisation problem. This justifies \cref{eq:genkl_main}. Another interpretation of this property is that the KL between $p_1$ and the set $\left\{c f_2; c>0 \right\}$ is just the KL between $p_1$ and $p_2$.

The KL divergence between two un-normalised densities relates to the self-normalised log-likelihood as such:

\begin{align}
    \textup{KL}(p_\textup{data} ||c \expp{-E_\theta}) &= \int \log\left(\frac{p_\textup{data}(x)}{c \expp{-E_\theta(x)} } \right) p_\textup{data}(x)(x) \mathrm{d}x +   c Z_\theta - 1  \\ &= - \int \left(\expp{-E_\theta(x)} p_\textup{data}(x) - \log c + cZ_\theta - 1\right) \mathrm{d}x + \underbrace{\int \log (p_\textup{data}(x)) p_\textup{data}(x)\mathrm{d}x}_\text{does not depend on $\theta$ nor $c$}. 
\end{align}

As we assume we have access to an i.i.d. dataset $x_1,...,x_n$, we can estimate the above quantity:

\begin{align}
     \textup{KL}(p_\textup{data} ||c \expp{-E_\theta})  &\approx - \frac{1}{n} \sum_{i=1}^n \expp{-E_\theta(x_i)}  - \log c + cZ_\theta - 1  + \int \log (p_\textup{data}(x)) p_\textup{data}(x)\dif x \\ &= - \ell_{\snl}(\theta,\log c) + \int \log (p_\textup{data}(x)) p_\textup{data}(x)\dif x.
\end{align}

This implies that maximising the self-normalised log-likelihood will, asymptotically, resemble minimising the generalised Kullback-Leibler divergence.

\section{Algorithms}
\label{sec:algorithm}

\begin{algorithm}[H]
\small
	\SetKwInOut{Input}{input} \SetKwInOut{Output}{output}
	\DontPrintSemicolon
	\Input{Learning iterations, $T$; learning rate, $\eta$; initial parameters, $\{\theta_0, b_0\}$; observed examples, $\{x_i \}_{i=1}^n$; batch size, $n_b$; number of samples from the proposal $q$,  $M$.}
	\Output{ $\theta_T, b_T$.}
	\For{$t = 0:T-1$}{			
		\smallskip
		1. {\bf Mini-batch}: Sample observed examples $\{ x_i \}_{i=1}^{n_b}$. \\
		2. {\bf Proposal sampling}: Sample M elements from the proposal $x_m \sim \tilde{q}(x_m)$  \\
		3. {\bf Learn EBM parameters $\theta$}: Update $\theta_{t+1} = \theta_{t} - \eta \hat{\nabla_{\theta}} \ell_{\snl}(\theta,b)$  using $\hat{\nabla_{\theta}} \ell_{\snl}(\theta_{t},b)$ defined in \cref{eq:grad_theta_snl}. \\
		4. {\bf Learn $b$}: Update $b_{t+1} = b_{t} - \eta \hat{\nabla_{b}} \ell_{\snl}(\theta,b_{t})$ using $\hat{\nabla_{b}} \ell_{\snl}(\theta,b_{t})$ in defined \cref{eq:grad_b_snl}.}
	\caption{\label{algo:SNL_density_estimation} Training an EBM for density estimation using SNL loss and proposal $q$.}
	
\end{algorithm}

\begin{algorithm}[H]
\small
	\SetKwInOut{Input}{input} \SetKwInOut{Output}{output}
	\DontPrintSemicolon
	\Input{Learning iterations, $T$; learning rate, $\eta$; initial parameters, $\{\theta_0, \gamma_0, \phi_0, b_0\}$; observed examples, $\{x_i \}_{i=1}^n$; batch size, $n_b$; number of samples from the base $d$,  $M$.}
	\Output{ $\theta_T, \gamma_T, \phi_T, \ b_T$.}
	\For{$t = 0:T-1$}{			
		\smallskip
		1. {\bf Mini-batch}: Sample observed examples $\{ x_i \}_{i=1}^{n_b}$. \\
		2. {\bf Proposal sampling}: Sample M elements from the base $x_m \sim \tilde{d}(x_m)$  \\
		3. {\bf Learn EBM parameters $\theta$}: Update $\theta_{t+1} = \theta_{t} - \eta \hat{\nabla_{\theta}} \mathcal{L}_{\snl}((\theta, \gamma, \phi,b)$  using \cref{eq:snelbo_eq}. \\
        4. {\bf Learn VAE parameters}: Update $\{\gamma, \phi\}_{t+1} = \{\gamma, \phi\}_{t} - \eta \hat{\nabla}_{\{\gamma, \phi\}}\mathcal{L}_{\snl}(\theta, \gamma, \phi,b)$   using \cref{eq:snelbo_eq}. \\
		5. {\bf Learn $b$}: Update $b_{t+1} = b_{t} - \eta \hat{\nabla_{b}} \ell_{\snl}(\theta,b_{t})$  using \cref{eq:snelbo_eq}.}
	\caption{\label{algo:SNELBO_density_estimation} Training a VAE with EBM prior using the SNELBO loss.}
\end{algorithm}

\section{Derivation of the SNELBO}
\label{sec:elbo_snl_appendix}

Using the variational distribution $q_{\gamma}(z|x)$, we can write the regular ELBO for the VAE with the energy-based prior as
\begin{align}
    \mathcal{L}(\theta, \phi, \gamma) = \mathbb{E}_{q_{\gamma}(z|x)}[\log p_{\phi}(x|z)] + \mathbb{E}_{q_{\gamma}(z|x)} \left[\log{\frac{\expp{-E_{\theta}(z)}d(z)}{q_{\gamma}(z|x)Z_{\theta}}}\right] ,
\end{align}
which is a lower bound, $\ell(\theta, \phi) \geq \mathcal{L}(\theta, \phi, \gamma)$, on the log-likelihood
\begin{equation}
    \ell(\theta, \phi) = p_{\theta, \phi}(x) = \int p_\phi(x|z) p_\theta(z) \dif z  ,
\end{equation}
where we left out the sum over data to simply the notation.
Using \cref{lem:linear_log}, we define the SNELBO as
\begin{multline}
    \mathcal{L}_{\snl}(\theta, \phi, \gamma, b) = \mathbb{E}_{q_{\gamma}(z|x)}[\log p_{\phi}(x|z)] + \mathbb{E}_{q_{\gamma}(z|x)} \left[\log{\frac{d(z)}{q_{\gamma}(z|x)}}\right] \\
    + \mathbb{E}_{q_{\gamma}(z|x)}\left[-E_{\theta}(z) - b \right] -  Z_{\theta}\expp{-b} +1 ,
\end{multline}
which can be written using the base distribution $d$,
\begin{multline}
    \mathcal{L}_{\snl}(\theta, \phi, \gamma, b) = \mathbb{E}_{q_{\gamma}(z|x)}\left[\log p_{\phi}(x|z)\right] + \mathbb{E}_{q_{\gamma}(z|x)} \left[\log{\frac{d(z)}{q_{\gamma}(z|x)}}\right] \\
    + \mathbb{E}_{q_{\gamma}(z|x)}\left[-E_{\theta}(z) - b \right] -  \mathbb{E}_{d(z)}\left[\expp{-E_{\theta}(z) -b}\right] +1
\end{multline}

\cref{lem:linear_log} gives directly the following results :
    \begin{align}
         \ell(\theta, \phi) \geq \mathcal{L}(\theta, \phi, \gamma) \geq \mathcal{L}_{\snl}(\theta, \phi, \gamma, b)
    \end{align}

\section{Regression datasets}
\label{sec:ds_reg}

\begin{figure}[h!]
    
    \centering
    \begin{subfigure}{0.45\linewidth}
            \centering
            \includegraphics[width=\linewidth]{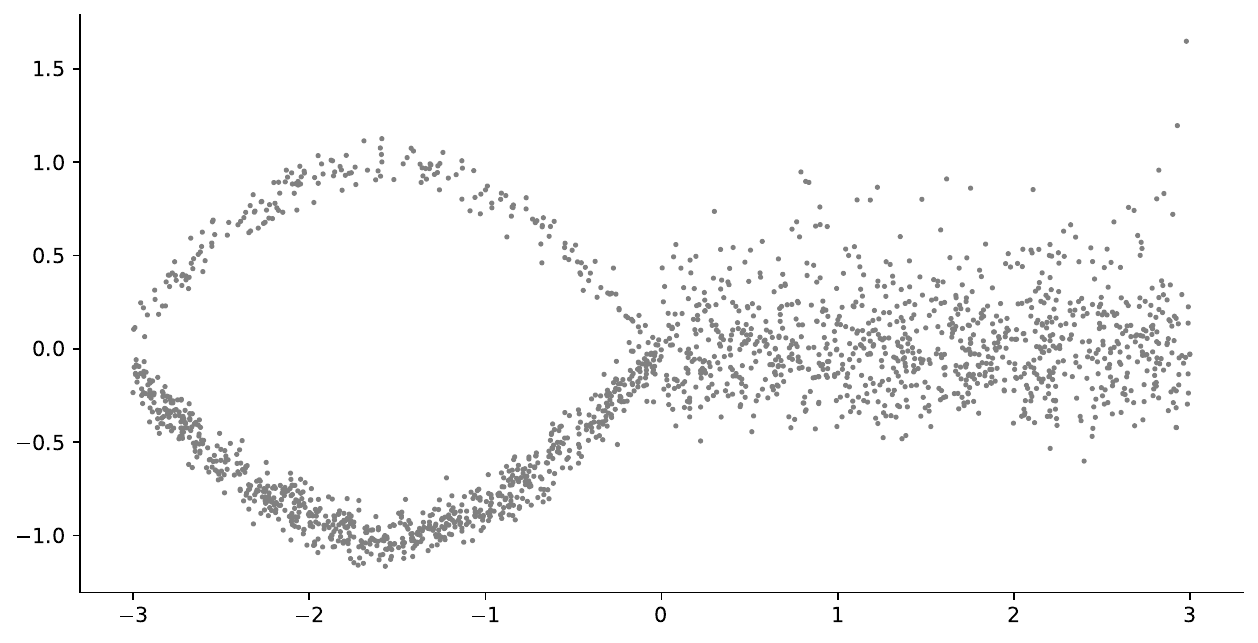}
            \caption{1D regression dataset 1}
    \end{subfigure}
    \hfill
    \begin{subfigure}{0.45\linewidth}
                \centering
            \includegraphics[width=\linewidth]{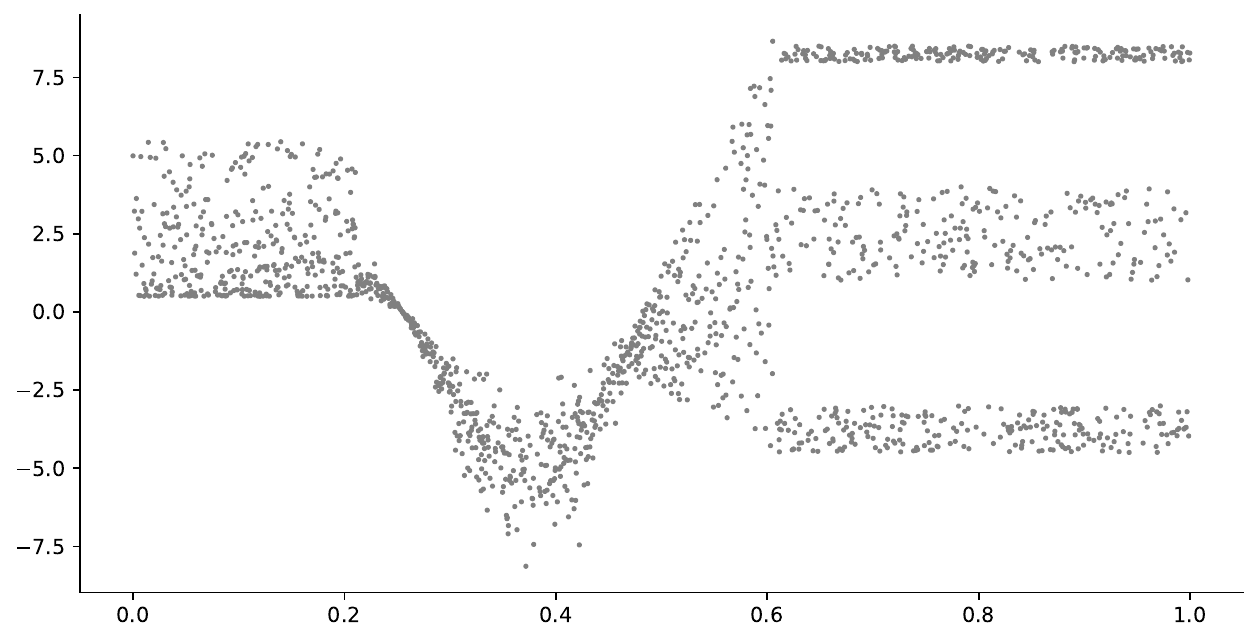}
            \caption{1D regression dataset 2}
    \end{subfigure}
    \caption{\label{fig:1dregressiondataset} Visualisation of the two toy regression datasets.}
    \label{fig:my_label}
\end{figure}

The first dataset set, on the left hand side of \cref{fig:my_label}, is a mixture of two gaussians with weights $0.2$ and $0.8$ for negative values on the $x$-axis and a log-normal distribution $\operatorname{Log}-\mathcal{N}(0., 0.25)$ for positive values of $x$. There are $2000$ training samples that we generated by uniformly sampling values in $[-3, 3]$.

The second dataset, on the right hand side of \cref{fig:my_label}, is defined for $x$ in $[0,1]$ and is divided in four different chunks. The first one, for $x< 0.21$, is sampled from $\operatorname{Beta}(\alpha=0.5, \beta=1)$; the second one, for $0.21 \leq x < 0.47$ is sampled from $\mathcal{N}\left(\mu=3 \cdot \cos x-2, \sigma=\left|3 \cdot \cos x-2\right|\right)$; the third one for $0.47 \leq x < 0.61$ from an increasing uniform distribution; the fourth and last one, for $ 0.61 \leq x \leq 1$ is obtained from a mixture of uniform distribution, $\mathcal{U}(8,0.5), \mathcal{U}(1,3)$ and $\mathcal{U}(-4.5,1.5)$.

\clearpage
\section{Additional results}
\begin{table}[!h]
\begin{adjustbox}{max width=1\textwidth,center}
\begin{tabular}{cccccccll}
\toprule
\multicolumn{4}{c}{Models}                                                                                     & \multicolumn{1}{l}{} & \multicolumn{4}{c}{Datasets}                                                                                                \\ 
\multicolumn{1}{l}{} & \multicolumn{1}{l}{}             & \multicolumn{1}{l}{} & \multicolumn{1}{l}{}          & \multicolumn{1}{l}{} & \multicolumn{2}{c}{Regression Dataset 1} & \multicolumn{2}{c}{Regression Dataset 2}                                         \\ \cmidrule(r){6-7} \cmidrule(r){8-9} 
Objective            & \multicolumn{1}{l}{Proposal $q$} & $b_{\phi}$           & \multicolumn{1}{l}{Base Dist} & \multicolumn{1}{l}{} & $\ell_{\text{IS}}$ & $\ell_{\text{SNL}}$ & \multicolumn{1}{c}{$\ell_{\text{IS}}$} & \multicolumn{1}{c}{$\ell_{\text{SNL}}$} \\ \cmidrule{1-4} \cmidrule{6-9} 
NCE                  & $\mathcal{N}(\mu, \Sigma)$       & None                 & None                          &                      &          $-0.100$ \tiny{($\pm 0.186$)} &  $-0.638$ \tiny{($\pm 0.168$)} &  $-2.416$ \tiny{($\pm 0.376$)} &  $-3.049$ \tiny{($\pm 0.900$)}   \\
NCE                  & $\mathcal{N}(\mu, \Sigma)$       & None                 & $q$        & &     $-0.336$ \tiny{($\pm 0.468$)} &  $-1.567$ \tiny{($\pm 0.282$)} &   $-2.548$ \tiny{($\pm 0.232$)} &  $-2.676$ \tiny{($\pm 0.169$)}  \\
NCE                  & $\mathcal{N}(\mu, \Sigma)$       & MLP         & None                          &  &              $-0.030$ \tiny{($\pm 0.278$)} &  $-0.718$ \tiny{($\pm 0.256$)} &   $-2.592$ \tiny{($\pm 0.214$)} &  $-3.559$ \tiny{($\pm 1.881$)}  \\
NCE                  & $\mathcal{N}(\mu, \Sigma)$       & MLP         & $q$                           &                      &          $-0.644$ \tiny{($\pm 0.632$)} &  $-1.580$ \tiny{($\pm 0.480$)} &   $-2.426$ \tiny{($\pm 0.257$)} &  $-2.586$ \tiny{($\pm 0.238$)}    \\
NCE                  & MDN K2                           & None                 & None                          &                      &         $-0.570$ \tiny{($\pm 0.209$)} &  $-1.275$ \tiny{($\pm 0.688$)} &   $-2.451$ \tiny{($\pm 0.040$)} &  $-3.094$ \tiny{($\pm 0.515$)} \\
NCE                  & MDN K2                           & MLP         & None                          &                      &     $-0.611$ \tiny{($\pm 0.154$)} &  $-1.492$ \tiny{($\pm 0.993$)} &   $-2.451$ \tiny{($\pm 0.088$)} &  $-2.634$ \tiny{($\pm 0.084$)}     \\
SNL                  & $\mathcal{N}(\mu, \Sigma)$       & None                 & None                          &                      &     $0.091$ \tiny{($\pm 0.122$)} &  $-0.023$ \tiny{($\pm 0.071$)} &   $-1.597$ \tiny{($\pm 0.047$)} &  $-1.619$ \tiny{($\pm 0.063$)}   \\
SNL                  & $\mathcal{N}(\mu, \Sigma)$       & None                 & $q$                           &                      &             $0.065$ \tiny{($\pm 0.084$)} &  $-0.044$ \tiny{($\pm 0.095$)} &   $-1.493$ \tiny{($\pm 0.039$)} &  $-1.503$ \tiny{($\pm 0.041$)}  \\
SNL                  & $\mathcal{N}(\mu, \Sigma)$       & MLP         & None                          &                      & $0.164$ \tiny{($\pm 0.088$)} &  $0.033$ \tiny{($\pm 0.077$)} &   $-1.813$ \tiny{($\pm 0.109$)} &  $-1.836$ \tiny{($\pm 0.109$)} \\
SNL                  & $\mathcal{N}(\mu, \Sigma)$       & MLP         & $q$                           &                      &     $0.091$ \tiny{($\pm 0.094$)} &  $-0.048$ \tiny{($\pm 0.030$)} &   $-\mathbf{1.468}$ \tiny{($\pm 0.014$)} &  $-\mathbf{1.477}$ \tiny{($\pm 0.016$)} \\
SNL                  & MDN K2                           & None                 & None                          &                      &  $0.227$ \tiny{($\pm 0.058$)} &  $0.221$ \tiny{($\pm 0.059$)} &   $-2.061$ \tiny{($\pm 0.145$)} &  $-2.070$ \tiny{($\pm 0.141$)} \\
SNL                  & MDN K2                           & MLP         & None                          &                      &    $\mathbf{0.255}$ \tiny{($\pm 0.017$)} &  $\mathbf{0.251}$ \tiny{($\pm 0.016$)} &   $-2.099$ \tiny{($\pm 0.250$)} &  $-2.170$ \tiny{($\pm 0.353$)}  \\ \bottomrule 
\end{tabular}
\end{adjustbox}
\vspace{1mm}
\caption{\label{table:largetoydensityestimation} Evaluation of regression EBMs on the 1D toy regression problems with two different objectives and different sets of parameters. Each model is trained for five runs and we report the mean and standard deviation of the estimated log-likelihood $\ell_{\text{IS}}$ and the self normalized log-likelihood $\ell_{\text{SNL}}$. Using the SNL as objective clearly outperforms the NCE. }
\end{table}
\begin{table}[!h]
\begin{adjustbox}{max width=1\textwidth,center}
\begin{tabular}{@{}cclllllllll@{}}
\toprule
\multicolumn{2}{c}{Models}                                                         &  & \multicolumn{8}{c}{Datasets}                                                                                                                                                                                                                                                                                                              \\ 
\multicolumn{1}{l}{} & \multicolumn{1}{l}{}                                        &  & \multicolumn{2}{c}{Steering Angle}                                               & \multicolumn{2}{c}{Cell Count}                                                   & \multicolumn{2}{c}{UTKFaces}                                                     & \multicolumn{2}{c}{BIWI}                                                         \\ \cmidrule(lr){4-5} \cmidrule(lr){6-7} \cmidrule(lr){8-9} \cmidrule(lr){10-11}
Objective            & Proposal                                                    &  & \multicolumn{1}{c}{$\ell_{\text{IS}}$} & \multicolumn{1}{c}{$\ell_{\text{SNL}}$} & \multicolumn{1}{c}{$\ell_{\text{IS}}$} & \multicolumn{1}{c}{$\ell_{\text{SNL}}$} & \multicolumn{1}{c}{$\ell_{\text{IS}}$} & \multicolumn{1}{c}{$\ell_{\text{SNL}}$} & \multicolumn{1}{c}{$\ell_{\text{IS}}$} & \multicolumn{1}{c}{$\ell_{\text{SNL}}$} \\ \cmidrule(r){1-2} \cmidrule(l){4-11} 
NCE                  & $\mathcal{N}(\mu,\Sigma)$ &  & $ -3.649$ \tiny{$(\pm 1.224)$}                    & \textsc{Unnormalized}                                      & $-3.367$ \tiny{$(\pm 0.399)$}                     & $-9.675$ \tiny{$(\pm 0.605)$}                      & $-3.147$ \tiny{$(\pm 0.1100)$}                    & $-8.223$ \tiny{$(\pm 3.795)$}                      & $-11.02$ \tiny{$(\pm 0.576)$}                     & \textsc{Unnormalized}                                     \\
NCE                  & MDN-4                                                       &  & $-4.044$ \tiny{($\pm 0.741$)}&  $-10.272$ \tiny{($\pm 0.742$)}&  $-3.856$ \tiny{($\pm 0.029$)}&  \textsc{Unnormalized} &$-3.876$ \tiny{($\pm 0.140$)}&  $-4.821$ \tiny{($\pm 0.233$)}&  $-12.093$ \tiny{($\pm 0.155$)}&  \textsc{Unnormalized}   \\
NCE                  & MDN-8                                                       &  &       $-4.001$ \tiny{($\pm 0.667$)}&  \textsc{Unnormalized} &$-3.864$ \tiny{($\pm 0.048$)}&  \textsc{Unnormalized} &$-4.123$ \tiny{($\pm 0.21$)}&  $-5.170$ \tiny{($\pm 0.955$)}&  $-11.998$ \tiny{($\pm 0.339$)}&  \textsc{Unnormalized}  \\
SNL                  & $\mathcal{N}(\mu,\Sigma)$ &  & $-2.665$ \tiny{$(\pm 1.37)$}                      & $-3.973$ \tiny{$(\pm 3.15)$}                       & $-2.701$ \tiny{$(\pm 0.041)$}                     & $-2.725$ \tiny{$(\pm 0.046$}                     & $-2.966$ \tiny{$(\pm 0.057)$}                     & $-2.991$ \tiny{$(\pm 0.069)$}                      & $-10.86$ \tiny{$(\pm 1.017)$}                     & $-11.05 $ \tiny{$(\pm 1.141)$}                     \\
SNL                  & Uniform                                                     &  & $-\mathbf{1.402}$ \tiny{$(\pm 0.068$}                     & $-\mathbf{1.423}$ \tiny{$(\pm 0.074)$}                      &     $-\mathbf{2.604}$ \tiny{$(\pm 0.001$}      &      $-\mathbf{2.620}$    \tiny{$(\pm 0.007)$}                          & $-2.927$ \tiny{$(\pm 0.032)$}                     & $-2.965$ \tiny{$(\pm 0.019)$}                     & $-10.44$ \tiny{$(\pm 0.138)$}                     & $-10.51$ \tiny{$(\pm  1.222)$}                     \\
SNL                  & MDN-4                                                       &  & $-1.780$ \tiny{$(\pm 0.2312)$}                    & $-1.795$ \tiny{$(\pm 0.231)$}                      & $-2.834$ \tiny{$(\pm 0.041)$}                     & $-2.846$ \tiny{$(\pm 0.043)$}                      & $-2.992$ \tiny{$(\pm 0.045)$}                     & $-3.004$ \tiny{$(\pm 0.075)$}                      & $-10.08$ \tiny{$(\pm 0.149)$}                     & $-10.11$ \tiny{$(\pm 0.126)$}                      \\
SNL                  & MDN-8                                                       &  & $-1.673$ \tiny{$(\pm 0.042)$}                     & $-1.692$ \tiny{$(\pm 0.046)$}                      & $-2.801$ \tiny{$(\pm 0.071)$}                     & $-2.811$ \tiny{$(\pm 0.071)$}                      & $-\mathbf{2.921}$ \tiny{$(\pm 0.055)$}                     & $-\mathbf{2.943}$ \tiny{$(\pm 0.062)$}                      & $-\mathbf{10.01}$ \tiny{$(\pm 0.092)$}                     & $-\mathbf{10.04}$ \tiny{$(\pm 0.091)$}                     \\ \bottomrule 
\end{tabular}
\end{adjustbox}
\caption{\label{table:image_regression} Evaluation of EBMs for regression on image regression datasets with two different objectives and different proposals. Each model is trained for five runs and we report the mean and standard deviation of the estimated log-likelihood ($\ell_{IS}$) and estimated self-normalised log-likelihood ($\ell_{SNL}$). When the proposal is MDN, the proposal is learned jointly with the model following \cite{gustafsson2022learning}.}
\vspace{1mm}
\end{table}

\newpage
\section{Description of the neural networks}
\subsection{2d distribution estimation}
\label{sec:2d_network}

\begin{table}[h!]
\centering
\begin{tabular}{|lll|}
\hline
$E_{\theta}$                      & Activation   & Output shape          \\ \hline
Fully Connected                    & ReLU & $2 \times 200$ \\  
Fully Connected                   & ReLU & $200 \times 100$ \\ 
Fully Connected                   & ReLU &  $100 \times 50$ \\ 
Fully Connected                   & ReLU &  $50 \times 50$ \\ 
Fully Connected                   & ReLU &  $50 \times 1$ \\ \hline
Total trainable parameters &              & 30450                   \\ \hline
\end{tabular}
\caption{$E_{\theta}$ for the toy distribution estimation}
\label{tab:toy_architecture}
\end{table}

\subsection{1d regression}
\label{sec:1d_arch}

\begin{table}[!h]
\centering
\begin{tabular}{|lll|}
\hline
Feature extractor                  & Activation   & Output shape          \\ \hline
Fully Connected                    & ReLU & $10 \times 10$ \\  
Fully Connected                   & ReLU & $10 \times 10$ \\ 
Fully Connected                   & ReLU &  $10 \times 1$ \\ 
\hline
Total trainable parameters &              & 210                   \\ \hline
\end{tabular}
\caption{Feature extractor. Inputs $x$ and outputs $h_x$}
\end{table}

\begin{table}[!h]
\centering
\begin{tabular}{|lll|}
\hline
$E_{\theta}$                    & Activation   & Output shape          \\ \hline
\multicolumn{3}{|c|}{Input $y$ $\rightarrow$ Output $f(y)$} \\
\hline
Fully Connected                    & ReLU & $1 \times 16$ \\  
Fully Connected                   & ReLU & $16 \times 32$ \\ 
Fully Connected                   & ReLU &  $32 \times 64$ \\ 
Fully Connected                   & ReLU &  $64 \times 128$ \\ 
\hline

\multicolumn{3}{|c|}{Concatenation of $h_x$ and $f(y)$} \\
\hline
Fully Connected                   & ReLU &  $144 \times 10$ \\ 
Fully Connected                   & ReLU &  $10 \times 1$ \\ 
\hline
Total trainable parameters &              & 30450                   \\ \hline
\end{tabular}
\caption{$E_{\theta}$ for 1d regression estimation.}
\end{table}

\begin{table}[!h]
\centering
\begin{tabular}{|lll|}
\hline
MDN                    & Activation   & Output shape          \\ \hline
\multicolumn{3}{|c|}{Input $h_x$} \\
\hline
Fully Connected                    & ReLU & $10 \times 10$ \\  
Fully Connected                   & ReLU & $10 \times K$ \\ 
\hline
Total trainable parameters &              & $100 + 10  \times K$                   \\ \hline
\end{tabular}
\caption{Neural network estimating one parameter of the MDN with $K$ components in the mixture.}
\end{table}

\begin{table}[!h]
\centering
\begin{tabular}{|lll|}
\hline
$b_{\phi}$                    & Activation   & Output shape          \\ \hline
\multicolumn{3}{|c|}{Input $h_x$} \\
\hline
Fully Connected                    & ReLU & $10 \times 10$ \\  
Fully Connected                   & ReLU & $10 \times 1$ \\ 
\hline
Total trainable parameters &              & $110$                   \\ \hline
\end{tabular}
\caption{Neural network estimating the normalization constant $Z_{\theta,x}$ for every $x$.}
\end{table}

\newpage
\subsection{Image regression}

The feature extractor is a Resnet-18 \cite{he2016deep} from torchvision \cite{pytorch}.

\begin{table}[h!]
\centering
\begin{tabular}{|lll |}
\hline
$E_{\theta}$                    & Activation   & Output shape          \\ \hline
\multicolumn{3}{|c|}{Input $y$ $\rightarrow$ Output $f(y)$} \\
\hline
Fully Connected                    & ReLU & $1 \times 16$ \\  
Fully Connected                   & ReLU & $16 \times 32$ \\ 
Fully Connected                   & ReLU &  $32 \times 64$ \\ 
Fully Connected                   & ReLU &  $64 \times 128$ \\ 
\hline

\multicolumn{3}{|c|}{Concatenation of $h_x$ and $f(y)$} \\
\hline
Fully Connected                   & ReLU &  $640 \times 640$ \\ 
Fully Connected                   & ReLU &  $640 \times 1$ \\ 
\hline
Total trainable parameters &              & $420816$                   \\ \hline
\end{tabular}
\caption{$E_{\theta}$ for 1d regression estimation.}
\end{table}

\begin{table}[h!]
\centering
\begin{tabular}{|lll |}
\hline
MDN                    & Activation   & Output shape          \\ \hline
\multicolumn{3}{|c|}{Input $h_x$} \\
\hline
Fully Connected                    & ReLU & $512 \times 512$ \\  
Fully Connected                   & ReLU & $512 \times K$ \\ 
\hline
Total trainable parameters &              & $262144 + 512 \times K$                   \\ 
\hline
\end{tabular}
\caption{Neural network estimating one parameter of the MDN with $K$ components in the mixture. We use three such networks for $\pi_{\psi}, \mu_{\psi}, \sigma_{\psi}$.}
\end{table}

\begin{table}[h!]
\centering
\begin{tabular}{|lll|}
\hline
$b_{\phi}$                    & Activation   & Output shape          \\ \hline
\multicolumn{3}{|c|}{Input $h_x$} \\
\hline
Fully Connected                    & ReLU & $512 \times 512$ \\  
Fully Connected                   & ReLU & $512 \times 1$ \\ 
\hline
Total trainable parameters &              & $262656$                   \\ \hline
\end{tabular}
\caption{Neural network estimating the normalization constant $Z_{\theta,x}$ for every $x$.}
\end{table}

\subsection{VAE with prior EBM}

\begin{table}[h!]
	\centering
	\def\arraystretch{1.}
	\begin{tabular}{ |c|c|c| }
		\hline
		\multicolumn{3}{ |c| }{\textbf{EBM Model for BinaryMNIST}} \\
		\hline
		Layers & In-Out Size & Stride\\ \hline
		Input: $z$ & 100 & \\
		Linear, LReLU & 200 &  - \\
		Linear, LReLU & 200 &  - \\
		Linear & 1 & - \\
		\hline
		\multicolumn{3}{ |c| }{\textbf{Generator Model for BinaryMNIST}, ngf = 16} \\
		\hline
		Input: $z$ & 16$\times$1$\times$1 & \\
		4$\times$4 convT(ngf $\times$ $8$), LReLU & 4$\times$4$\times$(ngf $\times$ 8) & 1 \\
		3$\times$3 convT(ngf $\times$ $4$), LReLU & 7$\times$7$\times$(ngf $\times$ 4) & 2\\
		4$\times$4 convT(ngf $\times$ $2$), LReLU & 14$\times$14$\times$(ngf $\times$ 2) & 2\\
		4$\times$4 convT(3), Sigmoid & 28$\times$28$\times$1 & 2 \\ 
		\hline
		\multicolumn{3}{ |c| }{\textbf{Encoder Model for BinaryMNIST}, ngf = 16} \\
		\hline
		Input: $\times$ & 1$\times$28$\times$28 & \\
		5$\times$5 conv(ngf $\times$ $2$), LReLU & 14$\times$14$\times$(ngf $\times$ 2) & 2 \\
		5$\times$5 conv(ngf $\times$ $4$), LReLU & 7$\times$7$\times$(ngf $\times$ 4) & 2\\
		5$\times$5 conv(ngf $\times$ $8$), LReLU & 4$\times$4$\times$(ngf $\times$ 8) & 2\\
		Linear, - & 16 &  - \\
		\hline
	\end{tabular}
	\vspace{3mm}
	\label{tab:architecture}
\end{table}

\newpage

\begin{figure}[!h]
    \centering
    \setlength{\tabcolsep}{1pt} 
    \begin{tabular}{ccc}
    \begin{subfigure}{.32\textwidth}
        \centering
        \includegraphics[width=\linewidth]{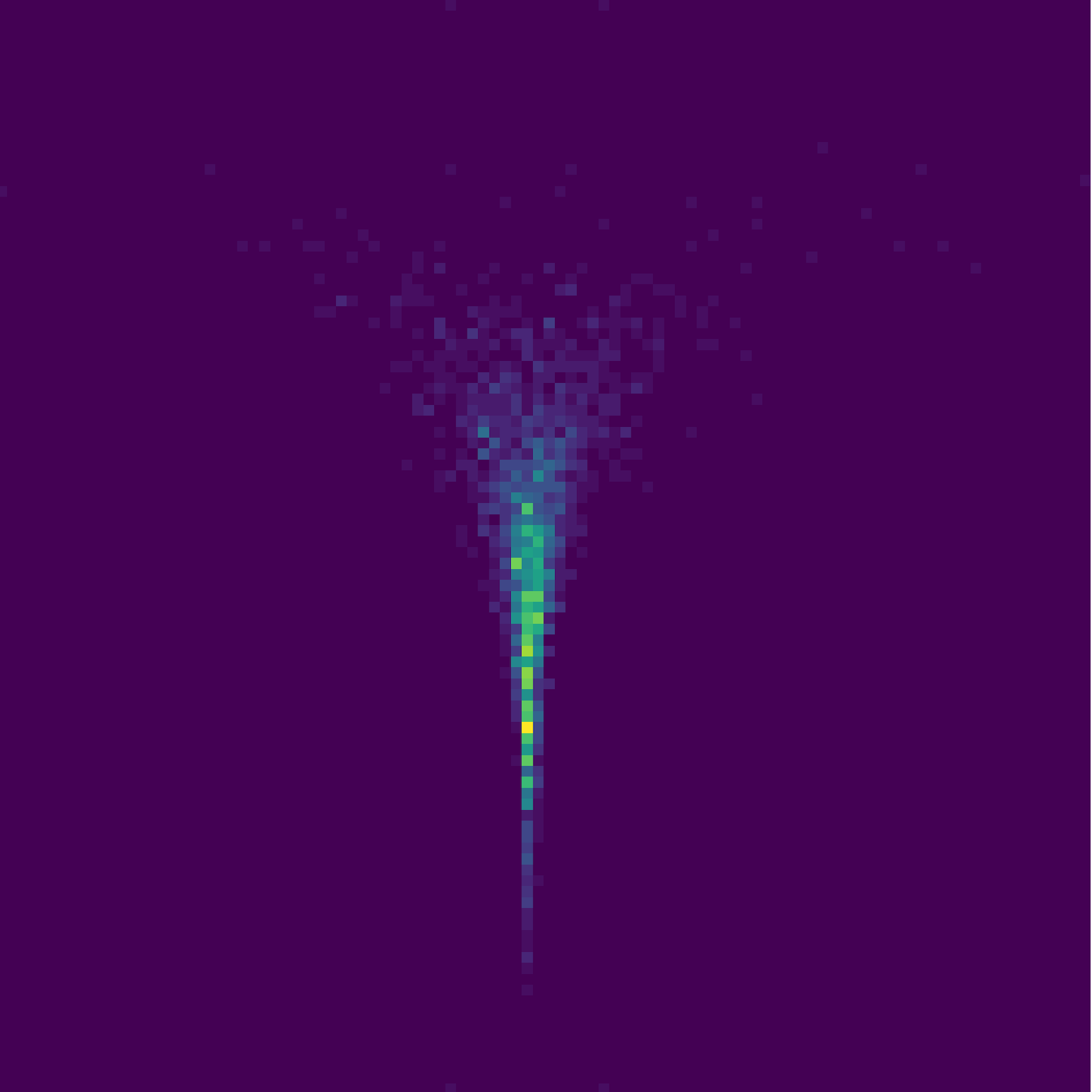}
    \end{subfigure} &
    \begin{subfigure}{.32\textwidth}
        \centering
        \includegraphics[width=\linewidth]{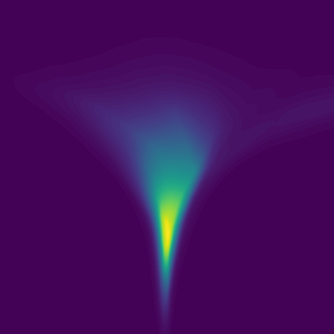}
    \end{subfigure} &
    \begin{subfigure}{.32\textwidth}
        \centering
        \includegraphics[width=\linewidth]{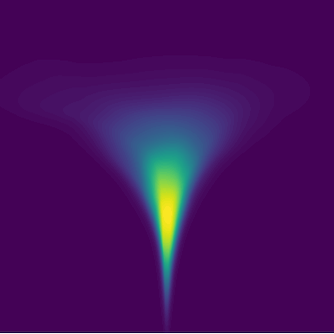}
    \end{subfigure}\\ 
    \begin{subfigure}{.32\textwidth}
        \centering
        \includegraphics[width=\linewidth]{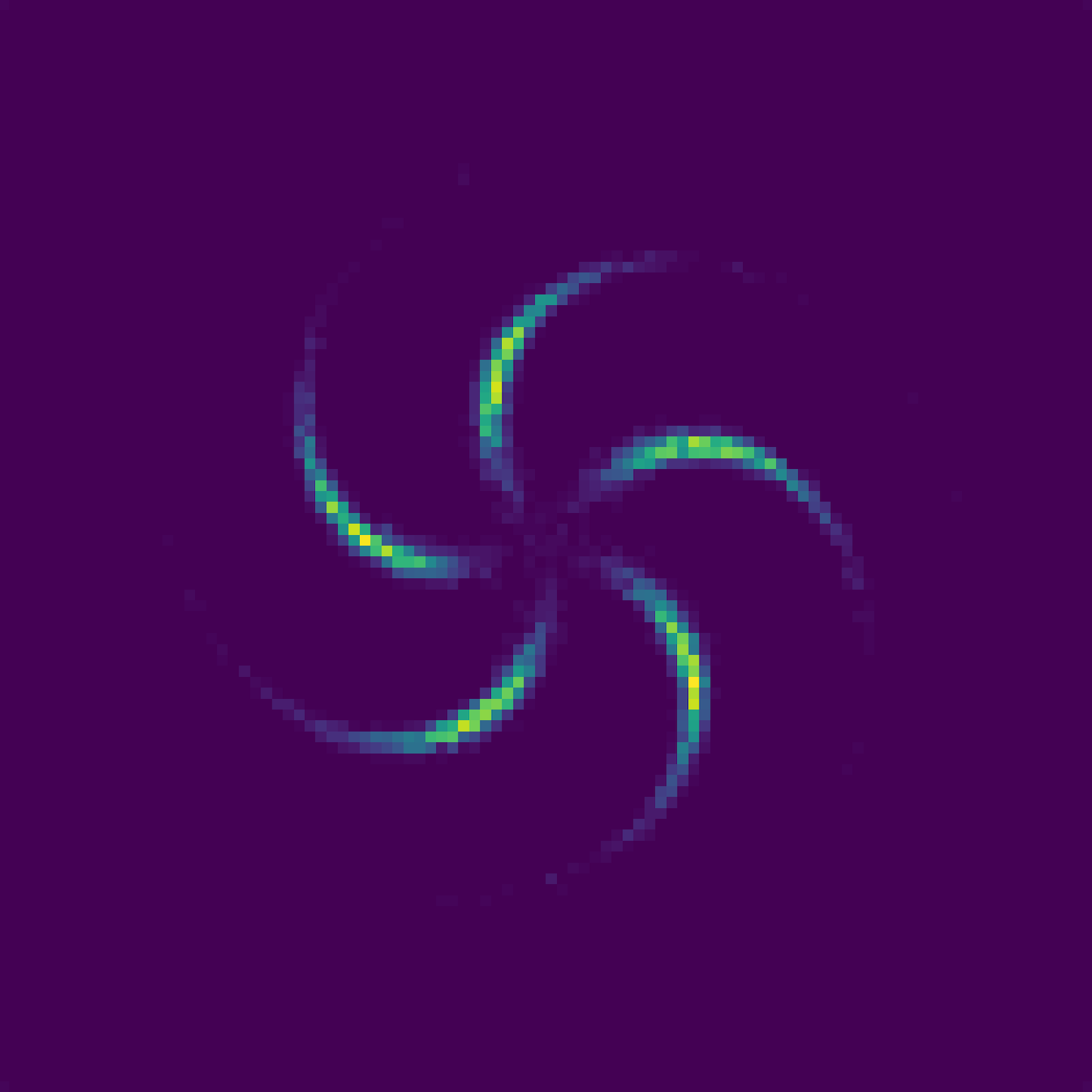}
    \end{subfigure} &
    \begin{subfigure}{.32\textwidth}
        \centering
        \includegraphics[width=\linewidth]{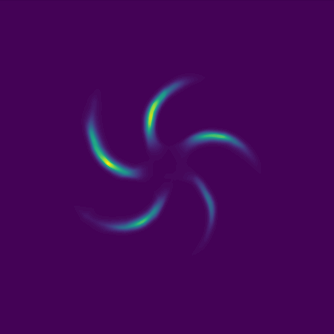}
    \end{subfigure} &
    \begin{subfigure}{.32\textwidth}
        \centering
        \includegraphics[width=\linewidth]{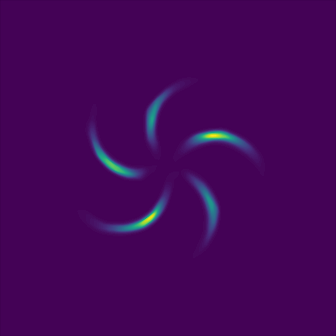}
    \end{subfigure}\\ 
    \begin{subfigure}{.32\textwidth}
        \centering
        \includegraphics[width=\linewidth]{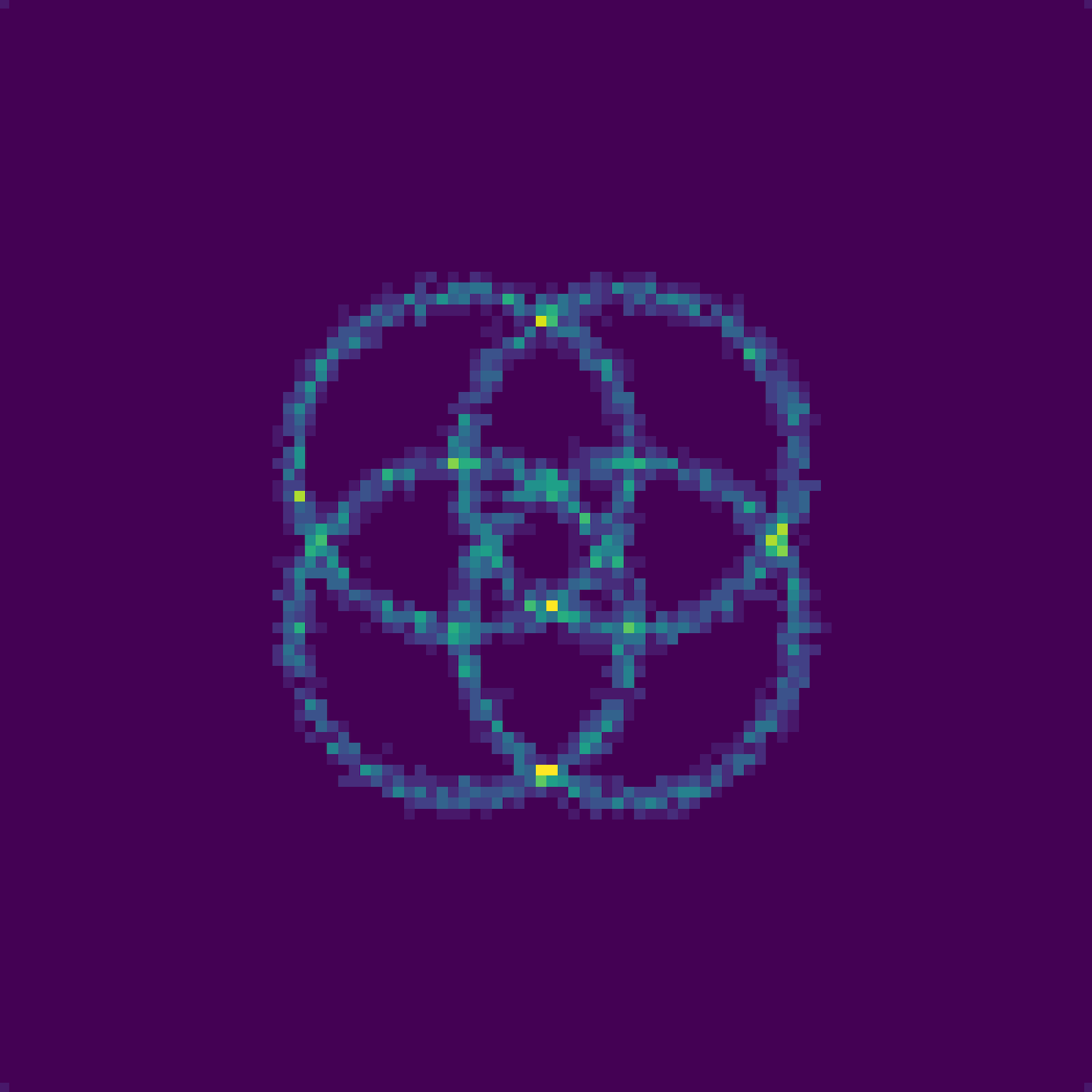}
    \end{subfigure} &
    \begin{subfigure}{.32\textwidth}
        \centering
        \includegraphics[width=\linewidth]{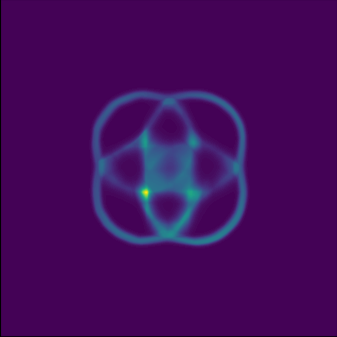}
    \end{subfigure} &
    \begin{subfigure}{.32\textwidth}
        \centering
        \includegraphics[width=\linewidth]{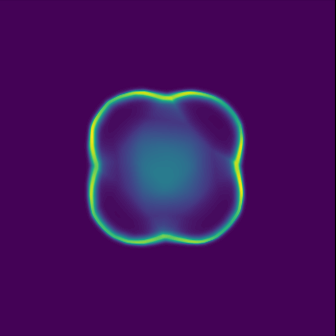}
    \end{subfigure}\\ 
    \begin{subfigure}{.32\textwidth}
        \centering
        \includegraphics[width=\linewidth]{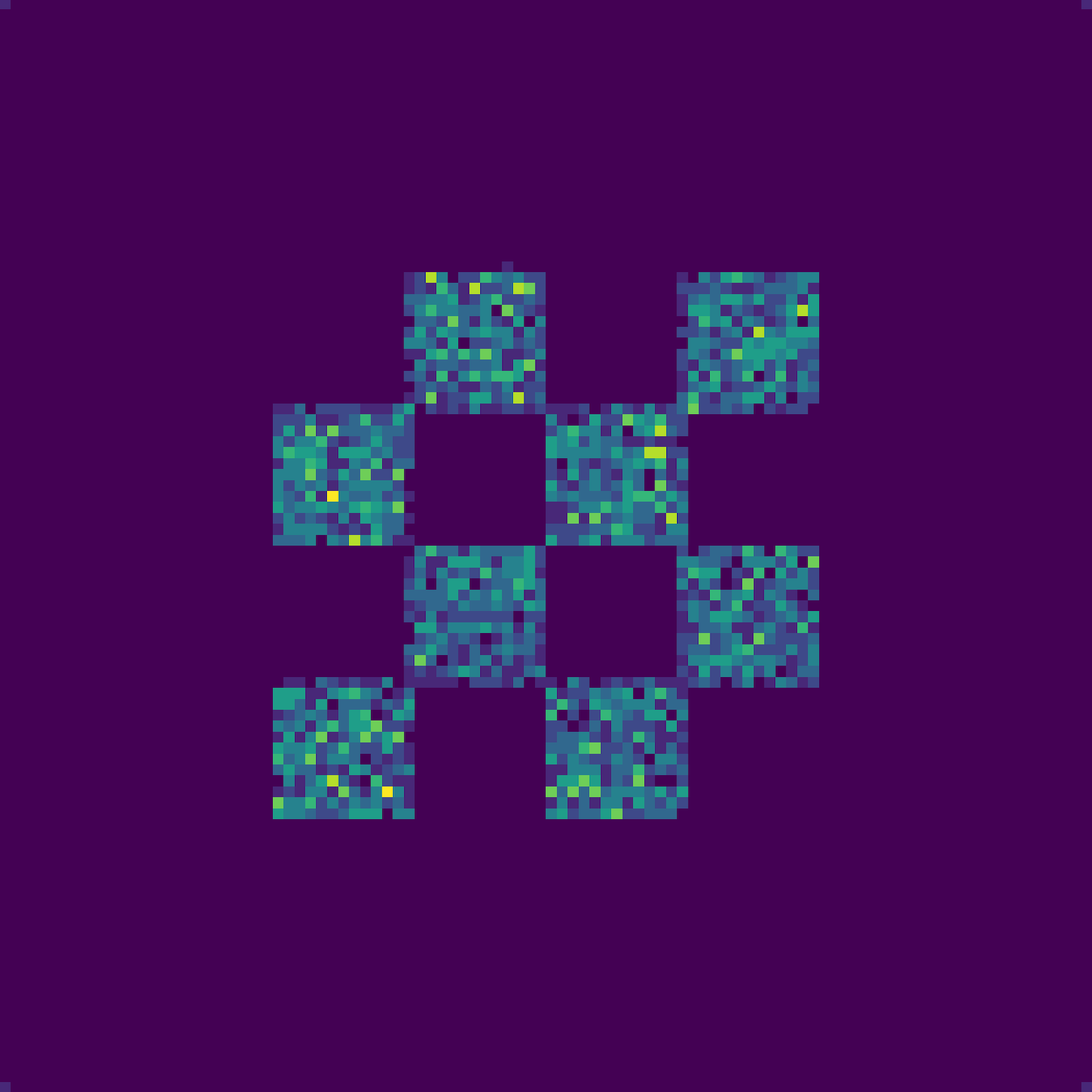}
    \end{subfigure} &
    \begin{subfigure}{.32\textwidth}
        \centering
        \includegraphics[width=\linewidth]{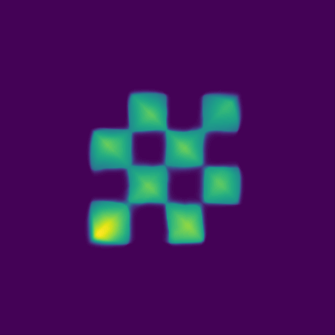}
    \end{subfigure} &
    \begin{subfigure}{.32\textwidth}
        \centering
        \includegraphics[width=\linewidth]{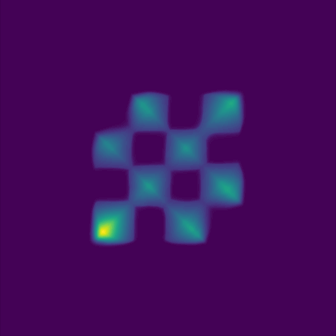}
    \end{subfigure}
    \end{tabular}
\caption{\label{fig:2d_density_estimation} Each row is a dataset, the first column displays samples from the dataset, the second column displays the energy function of an EBM trained with the self normalised log-likelihood (ours), the third column displays the energy function of an EBM trained with NCE. We use a standard Gaussian as base distribution for both training methods. These parameterisations corresponds to the first two lines of \cref{tab:small_toy_distribution_estimation}.}
\end{figure}

\newpage

\begin{@fileswfalse}
\bibliography{iclr2024_conference}
\end{@fileswfalse}

\end{document}